\crefname{hypothesis}{Hypothesis}{Hypotheses}
\Crefname{ALC@unique}{Line}{Lines}
\numberwithin{theorem}{section}
\def\be{\begin{equation}} \def\ee{\end{equation}}
\newcommand{\proj}{\mathrm{proj}_{\mathcal{Q}}}
\newcommand{\gk}{\gamma_k}
\newcommand{\ak}{\alpha_k}
\newcommand{\dis}{\mathrm{dist}}
\newcommand{\E}{\mathbb{E}}
\newcommand{\R}{\mathbb{R}}
\newcommand{\txk}{\tilde{x}^k}
\newcommand{\0}{\mathbf{0}}
\newcommand{\Q}{\mathcal{Q}}
\newcommand{\Li}{\mathcal{L}}
\newcommand{\la}{\langle}
\newcommand{\ra}{\rangle}
\newcommand{\ubar}[1]{\underaccent{\bar}{#1}}
\newcommand{\ba}{\bar{\alpha}}
\newcommand{\uba}{\ubar{\alpha}}
\colorlet{texcscolor}{blue!50!black}
\colorlet{texemcolor}{red!70!black}
\colorlet{texpreamble}{red!70!black}
\colorlet{codebackground}{black!25!white!25}
\lstdefinestyle{siamlatex}{%
  style=tcblatex,
  texcsstyle=*\color{texcscolor},
  texcsstyle=[2]\color{texemcolor},
  keywordstyle=[2]\color{texemcolor},
  moretexcs={cref,Cref,maketitle,mathcal,text,headers,email,url},
}
\DeclareTotalTCBox{\code}{ v O{} }
{ 
  fontupper=\ttfamily\color{black},
  nobeforeafter,
  tcbox raise base,
  colback=codebackground,colframe=white,
  top=0pt,bottom=0pt,left=0mm,right=0mm,
  leftrule=0pt,rightrule=0pt,toprule=0mm,bottomrule=0mm,
  boxsep=0.5mm,
  #2}{#1}
\patchcmd\newpage{\vfil}{}{}{}
\title{BinaryRelax: A Relaxation Approach For Training Deep Neural Networks With Quantized Weights%
  \thanks{P. Yin, S. Zhang, and J. Lyu contributed equally.
\funding{The work was partially supported by NSF grants DMS-1522383, IIS-1632935, ONR grant N00014-16-1-2157, DOE grant DE-SC00183838, and AFOSR grant FA 9550-15-0073.}}}
\author{Penghang Yin%
  \thanks{Department of Mathematics, University of California at Los Angeles, Los Angeles, CA 90095. (\email{yph@ucla.edu, sjo@math.ucla.edu}).}%
  \and
  Shuai Zhang%
  \thanks{Department of Mathematics, University of California at Irvine, Irvine, CA 92697.
    (\email{szhang3@uci.edu, jianchel@uci.edu, yqi@uci.edu, jxin@math.uci.edu}})
  \and
  Jiancheng Lyu%
  \footnotemark[3]
  \and
  Stanley Osher
  \footnotemark[2]
  \and 
  Yingyong Qi
  \footnotemark[3]
  \and
  Jack Xin
  \footnotemark[3]
}
\begin{document}
\maketitle

\begin{tcbverbatimwrite}{tmp_\jobname_abstract.tex}
\begin{abstract}
We propose BinaryRelax, a simple two-phase algorithm, for training deep neural networks with quantized weights. The set constraint that characterizes the quantization of weights is not imposed until the late stage of training, and a sequence of \emph{pseudo} quantized weights is maintained. Specifically, we relax the hard constraint into a continuous regularizer via Moreau envelope, which turns out to be the squared Euclidean distance to the set of quantized weights. The pseudo quantized weights are obtained by linearly interpolating between the float weights and their quantizations. A continuation strategy is adopted to push the weights towards the quantized state by gradually increasing the regularization parameter. In the second phase, exact quantization scheme with a small learning rate is invoked to guarantee fully quantized weights. We test BinaryRelax on the benchmark CIFAR and ImageNet color image datasets to demonstrate the superiority of the relaxed quantization approach and the improved accuracy over the state-of-the-art training methods. Finally, we prove the convergence of BinaryRelax under an approximate orthogonality condition. 
\end{abstract}

\begin{keywords}
  BinaryRelax, deep neural networks, quantization, continuous relaxation. 
\end{keywords}

\begin{AMS}
  		90C10, 90C26, 90C90 
\end{AMS}
\end{tcbverbatimwrite}
\input{tmp_\jobname_abstract.tex}

\section{Introduction}\label{sec:intro}
Deep neural networks (DNNs) have achieved remarkable success in computer vision, speech recognition, and natural language processing systems \cite{imagnet_12,mnist_98,dl_15,faster_rcnn}. There is thus a growing interest in deploying DNNs on low-power embedded systems with limited memory storage and computing power, such as cell phones and other battery-powered devices. However, DNNs typically require hundreds of megabytes of memory storage for the trainable full-precision floating-point parameters or weights, and need billions of FLOPs to make a single inference. This makes the deployment of DNNs impractical on portable devices. Recent efforts have been devoted to the training of DNNs with coarsely quantized weights which are represented using low-precision (8 bits or less) fixed-point arithmetic \cite{bnn_16,bc_15,twn_16,dorefa_16,ttq_16,lbwn_16,inq_17,entropy_17,halfwave_17,yoshida2018ternary,li2017sep}. Quantized neural networks enable substantial memory savings and computation/power efficiency, while achieving competitive performance with that of full-precision DNNs. Moreover, quantized weights can exploit hardware-friendly bitwise operations and lead to dramatic acceleration at inference time.

The simplest way to perform quantization would be directly rounding the weights of a pre-trained full-precision network. But without re-training, this naive approach often leads to poor accuracy at bit-width under 8. From the perspective of optimization, the training of quantized networks can be naturally abstracted as a constrained optimization problem of minimizing some empirical risk subject to a set constraint that characterizes the quantization of weights: 
\begin{equation}\label{eq:training}
\min_{x\in\R^n} \; f(x):=\frac{1}{N}\sum_{j=1}^{N} \ell_j(x) \quad \mbox{subject to} \quad x\in\Q.
\end{equation}
The problem has specific structures. Given a training sample of input $I_j$ and label $u_j$, the corresponding training loss takes the form 
$$
\ell_j(x) = \ell(\sigma_l(x_l*\cdots \sigma_1(x_1*I_j)), u_j),
$$
where $x = [x_{(1)}^\top, \dots, x_{(l)}^\top]^\top$ and $x_{(i)}\in\R^{n_i}$ contains the $n_i$ weights in the $i$-th linear (fully-connected or convolutional) layer with $\sum_{i=1}^l n_i = n$, $\sigma_i$ is some element-wise nonlinear function. "$*$" denotes either matrix-vector product or convolution operation; reshaping is necessary to avoid mismatch in dimensions. For layer-wise quantization, the set $\Q$ takes the form of $\Q_1\times \cdots\times \Q_l$, where $x_{(i)}\in \Q_i:=\mathbb{R_+} \times \{\pm q_1, \pm q_2, \dots, \pm q_m \}^{n_i}$. Here $\mathbb{R}_+$ denotes the set of nonnegative real numbers and $0\leq q_1< q_2< \dots < q_m$ represent the $m$ quantization levels and are pre-determined. The weight vector in the $i$-th layer enjoys the factorization $x_{(i)} = s_i \cdot Q_{(i)}$ for some $Q_{(i)}\in \{\pm q_1, \pm q_2, \dots, \pm q_m\}^{n_i}$ and some trainable layer-wise scalar $s_i\geq 0$. Note that $s_i$ does not have to be low-precision. $s_i$ is shared by all weights across the $i$-th linear layer and will be stored separately from the quantized numbers $Q_i$ for deployment efficiency. The storage for the scaling factors is \emph{negligible} as there are so few of them. Weight quantization has two special cases as follows.
\begin{itemize}
\item 1-bit binarization: $m = 1$ and $\Q_i = \R_{+} \times \{\pm 1\}^{n_i}$. The storage of $Q_{(i)}$'s only needs 1 bit for representing the signs. Compared to the full-precision model, we have $32\times$ memory savings.
\item 2-bit ternarization: $m = 2$ and $\Q_i = \R_{+} \times \{0, \pm 1\}^{n_i}$. The storage needs 2 bits for representing the signs and the binary numbers $\{0,1\}$. Therefore, it gives $16\times$ model compression rate. 
\end{itemize}
The acceleration through low-bit weights is achieved by leveraging the distributive law during forward propagation. For example, propagation through the first linear layer yields the computation of 
$$ 
x_{(1)}*I = (s_1 \cdot Q_{(1)})*I = s_1\cdot(Q_{(1)}*I).
$$
When $Q_{(1)}$ is under 1-bit or 2-bit representation, the computation of $Q_{(1)}*I$ can be extremely fast as there are additions/subtractions involved only.

On the computational side, with sampled mini-batch gradient $\nabla f_k$ at the $k$-th iteration, the classical projected stochastic gradient descent (PSGD) \cite{combettes2015stochastic,rosasco2014convergence}
\begin{equation}\label{eq:psgd}
\begin{cases}
y^{k+1} = x^k - \gk \nabla f_k(x^{k}) \\
x^{k+1} = \proj(y^{k+1}),
\end{cases}
\end{equation}
performs poorly however, and gets stagnated when updated with a small learning rate $\gamma_k$. It is the quantization/projection of weights that ``rounds off'' small gradient updates and causes the plateau as explained by Li et al. in a recent study \cite{Goldstein_17}. Instead of using the standard gradient step in (\ref{eq:psgd}), a hybrid gradient update 
\begin{equation*}
y^{k+1} = y^k - \gk \nabla f_k(x^{k})
\end{equation*}
was adopted by Courbariaux et al. \cite{bc_15} and showed significantly improved accuracy. This modification of PSGD is referred as BinaryConnect in \cite{Goldstein_17}. BinaryConnect has become the workhorse algorithm for training quantized DNN models such as Xnor-Net \cite{xnornet_16} and TWN \cite{twn_16}. By introducing the augmented Lagrangian of (\ref{eq:training}), more complicated algorithms based on alternating minimization were proposed in \cite{carreira1} and \cite{admm_17}. Despite the succinctness and effectiveness of BinaryConnect, its convergence still lacks understanding. The only analysis so far, to our knowledge, appeared in \cite{Goldstein_17} under convexity assumption on the loss function. Researchers have also explored different quantizers, whether uniform or not \cite{bc_15,xnornet_16,dorefa_16,twn_16,lbwn_16,entropy_17,inq_17,carreira2,lacey2018stochastic}. All these methods maintain a sequence of purely quantized weights, if not the optimal, during the training.

In this paper, we propose a novel relaxed quantization approach called BinaryRelax, to explore more freely the non-convex landscape of the objective function of the DNNs under the discrete quantization constraint. We relax the set constraint into a continuous regularizer, which leads to a relaxed quantization update. Besides, we set an increasing regularization parameter, driving $x^k$ slowly to the quantized state. When the training error stops decaying at small $\gamma_k$, we switch to regular quantization to get genuinely quantized weights as desired. By exploiting the structure of quantization set $\Q$, we prove the convergence of BinaryRelax in the non-convex setting, which naturally covers that of BinaryConnect. 

The rest of the paper is organized as follows. In section \ref{sec:br}, we introduce the proposed BinaryRelax method. In section \ref{sec:exper}, we benchmark CIFAR-10 and CIFAR-100 datasets and compare BinaryRelax with state-of-the-art methods to demonstrate the benefits of performing relaxed quantization. In section \ref{sec:converg}, we establish the convergence results. The concluding remarks are given in section \ref{sec:rem}. All technical proofs will be provided in the appendix.

\subsection*{Notations} $\|\cdot\|$ denotes the Euclidean norm; $\|\cdot\|_1$ denotes the $\ell_1$ norm; $\|\cdot\|_0$ counts the number of nonzero components. $\0\in\R^n$ represents the vector of zeros. For any vector $x\in\R^n$ and closed set $\Q\subset \R^n$, 
$$\proj(x):= \arg\min_{z\in\Q}\|x-z\|
$$ 
is the projection of $x$ onto $\Q$, and 
$$
\dis(x,\Q) := \min_{z\in\Q} \|x-z\|
$$
is the Euclidean distance between $x$ and $\Q$. When $\Q$ is a subspace in $\R^n$, $x\perp \Q$ means that $x$ is orthogonal to $\Q$. $\mathrm{sign}(x)$ is the signum function acting pointwise on $x$, i.e.,
\begin{equation*}
\mathrm{sign}(x)_i :=
\begin{cases}
1  & \mbox{if} \; x_i >0, \\
-1 & \mbox{if} \; x_i <0, \\
0 & \mbox{if} \; x_i = 0.
\end{cases}
\end{equation*}

\bigskip

\section{BinaryRelax}\label{sec:br}
Without loss of generality, we assume the set of quantized weights 
$$
\Q= \R_+ \times \{\pm q_1, \dots, \pm q_m \}^{n}
$$ 
throughout the paper. With that said, we only consider the case for simplicity that a single adjustable scaling factor is shared by all weights in the network.

\subsection{Quantization}\label{subsec:quant} 
In fact, for general $b$-bit quantization, 
$$\Q = \bigcup_{i=1}^p\Li_i$$ 
is the union of $p$ distinct one-dimensional subspaces $\Li_i\subset\R^n$, $i = 1, 2, \dots, p$, where 
$$
\Li_i = \{s \cdot L_i: s\in\R\}
$$ 
for some $L_i\in \{\pm q_1, \dots, \pm q_m \}^{n}\setminus\{\0\} \subset \R^n$. See also \cite{admm_17}. Fig. \ref{fig:quant} shows an example of $\Q= \mathbb{R}_+ \times \{0, \pm 1\}^2$, i.e., the ternarization of two weights.

\begin{figure}
\begin{center}
\includegraphics[width=.53\textwidth,height = 0.45\textwidth]{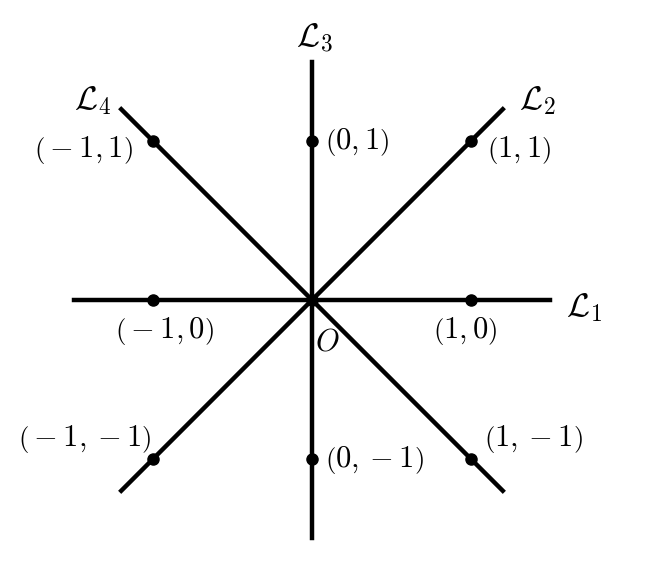}
\end{center}
\caption{Graphic illustration of $\Q= \mathbb{R}_+ \times \{0, \pm 1\}^2$. In this case, $b= n =2$, $p=4$.}\label{fig:quant}
\end{figure}

Given a float weight vector $y$, its quantization $x$ is basically the projection of $y$ onto the set $\Q$, which gives rise to the optimization problem
\begin{equation}\label{eq:proj}
x = \arg\min_{z\in\Q} \; \|z- y\|^2 = \proj(y).
\end{equation}
Note that $\Q$ is a non-convex set, then the projection may not be unique. In that case, we just assume $x$ is one of them. The above projection/quantization problem can be reformulated as 
\begin{equation}\label{eq:quant}
(s^*, \, Q^*) = \arg\min_{s, Q} \; \|s\cdot Q -y\|^2 \quad \mbox{subject to} \quad Q\in \{\pm q_1, \dots, \pm q_m \}^{n},
\end{equation}
The quantization of $y$ is then given by $\proj(y) = s^*\cdot Q^*$. (\ref{eq:quant}) is essentially a constrained $K$-means clustering problem of 1-D points.  The centroids are of the form $\pm (s \cdot q_j)$ with $1\leq j\leq m$, and they are determined by a single parameter $s$ since $q_j$'s are fixed. For uniform quantization where $q_j = j-1$, these centroids are equi-spaced. Given $s$, the assignment of float weights is then governed by $Q$. So the problem (\ref{eq:quant}) in principle can be solved by a variant of Lloyd's algorithm \cite{lloyd}, which iterates between the assignment step ($Q$-update) and centroid update step ($s$-update). In the $Q$-update of the $l$-th iteration, fixing the scaling factor $s^{l-1}$, each $Q_i^{l}$ is chosen from $\{\pm q_1, \dots, \pm q_m \}$ so that $s^{l-1} Q_i^{l}$ is the nearest centroid to $y_i$. In the $s$-update, a quadratic problem 
$$
\min_{s\in\R} \; \|s\cdot Q^{l} -y\|^2  
$$
is solved by $s^{l} = \frac{\langle Q^{l}, y \rangle}{\|Q^{l}\|^2}$.

The above procedure however, is impractical here, as the quantization is needed in every iteration of training. It has been shown that the closed form (exact) solution of (\ref{eq:quant}) can be computed at $O(n)$ complexity for binarization \cite{xnornet_16} where $Q\in\{\pm 1\}^n$:
\begin{equation}\label{eq:bin}
s^* = \frac{\|y\|_1}{n}, \; Q_i^* = 
\begin{cases}
1 & \mbox{if} \; y_i \geq 0\\
-1 & \mbox{otherwise.}
\end{cases}
\end{equation}
In the case of ternarization where $Q\in\{0, \pm 1\}^n$, an $O(n\log n)$ exact formula was found in \cite{lbwn_16}:
\begin{equation}\label{eq:ter}
t^* = \arg\max_{1\leq t\leq n} \frac{\|y_{[t]}\|_{1}^2}{t}, \; s^* = \frac{\|y_{[t^*]}\|_{1}}{t^*}, \; Q^* = \mathrm{sign}(y_{[t^*]}),
\end{equation}
where $y_{[t]}\in\R^n$ keeps the $t$ largest component in magnitude of $y$, while zeroing out the others. For quantization with wider bit-width ($b>2$), accurately solving (\ref{eq:quant}) becomes computationally intractable \cite{lbwn_16}. Empirical formulas have thus been proposed for an approximate quantized solution \cite{twn_16,lbwn_16,inq_17}, and they turn out to be sufficient for practical use. 
For example, a thresholding scheme of $O(n)$ complexity for ternarization was proposed in \cite{twn_16} as
\begin{equation}\label{eq:approx_ter}
\delta = \frac{0.7\|y\|_1}{n}, \; s^* = \frac{\sum_{i=1}^n |y_i|\cdot 1_{|y_i| \geq \delta} }{\sum_{i=1}^n 1_{|y_i| \geq \delta}}, \;
Q^*_i = \begin{cases}
\mathrm{sign}(y_i) & \mbox{if} \; |y_i| \geq \delta \\
0 & \mbox{otherwise.}
\end{cases}
\end{equation}
For $b>2$, Yin et al. \cite{lbwn_16} proposed to just perform one iteration of Lloyd's algorithm with a carefully initialized $Q$.

The focus of this paper is not on how to quantize a float weight vector. From now on, we simply assume that the quantization $\proj(y)$ can be computed precisely, regardless the choice of $q_j$'s.

\subsection{Moreau envelope and proximal mapping}
In the seminal paper \cite{Moreau_65}, Moreau introduced what is now called the Moreau envelope and the proximity operator (a.k.a. proximal mapping) that generalizes the projection. Let $g: \R^n \to (-\infty, \infty]$ be a lower semi-continuous extended-real-valued function. For any $t >0$, the Moreau envelope function $g_t$ is defined by 
$$
g_t(x) :=\inf_{z\in\R^n} \; g(z) + \frac{1}{2t}\|z-x\|^2.
$$
In general, $g_t$ is everywhere finite and locally Lipschitz continuous. Moreover, $g_t$ converges pointwise to $g$ as $t\to 0^+$. Moreau envelope is closely related to the inviscid Hamilton-Jacobi equation \cite{deep_relax}
$$
u_t + \frac{1}{2}\, |\, \nabla_x u\,|^2 = 0, \;\; u(x,0) = g(x),
$$
where $u(x,t) = g_t(x)$ is the unique viscosity solution of the above initial-value problem via the Hopf-Lax formula
$$
u(x, t) = \inf_z \; \left\{ g(z) + t H^*\left(\frac{z-x}{t}\right)  \right\}
$$
with the Hamiltonian $H(t,x,v) = \frac{1}{2}\|v\|^2$ and its Fenchel conjugate $H^* = H$. The proximal mapping of $g$ is defined by
$$
\mathrm{prox}_g(x) := \arg\min_{z\in\R^n} \; g(z) + \frac{1}{2}\|z-x\|^2.
$$
It is frequently used in optimization algorithms associated with non-smooth optimization problems such as total variation denoising \cite{GO_09}.

In particular, if $g=\chi_\mathcal{A}$ is the indicator function of a close set $\mathcal{A}\subset\R^n$, where 
\begin{equation*}
\chi_\mathcal{A}(x) = 
\begin{cases}
0 & \mbox{if } x\in \mathcal{A}\\
\infty & \mbox{otherwise.}
\end{cases}
\end{equation*}
The Moreau envelope is well defined for $t>0$ and is given by
$$
\inf_z \; \chi_\mathcal{A}(z) + \frac{1}{2t}\, \|z-x\|^2 = \inf_{z\in\mathcal{A}} \; \frac{1}{2t}\, \|z-x\|^2 = \frac{1}{2t}\dis(x,\mathcal{A})^2.
$$
And the proximal mapping $\mathrm{prox}_g(x)$ reduces to the projection $\mathrm{proj}_{\mathcal{A}}(x)$.

\subsection{Relaxed quantization} 
Let us begin with the alternative form of DNNs quantization problem (\ref{eq:training}):
\begin{equation}\label{eq:quant2}
\min_{x\in\R^n} \; f(x) + \chi_\Q(x),
\end{equation}
When both the objective function $f(x)$ and the set $\Q$ are non-convex, the discontinuity of $\chi_\Q$ poses an extra challenge in minimization since a continuous gradient descent update can be made stagnant when projected discontinuously. The Moreau envelope of $\chi_\Q$ is $\frac{1}{2t}\dis(x,\Q)^2$, which is continuously differentiable almost everywhere, except at points that have at least two nearest line subspaces, i.e., there exist two different ways to quantize $x$. We use $\frac{1}{2t}\dis(x,\Q)^2$ as the approximant of the discontinuous $\chi_\Q(z)$ and propose to minimize the relaxed training error
\begin{equation}\label{eq:binaryrelax}
\min_{x\in\R^n} \; f(x) + \frac{\lambda}{2} \dis(x,\Q)^2,
\end{equation}
where $\lambda = t^{-1}>0$ is the regularization parameter. 
When $\lambda\to\infty$, $\frac{\lambda}{2} \dis(x,\Q)^2$ converges pointwise to $\chi_\Q(x)$, and the global minimum of (\ref{eq:binaryrelax}) converges to that of (\ref{eq:quant2}).
\begin{proposition}\label{prop:global}
Suppose $f(x)$ is continuous. Let $f_\Q^* = \min_{x\in\Q} f(x)$ be the global minimum of (\ref{eq:quant2}) and $x^*_\lambda$ be the global minimizer of relaxed quantization problem (\ref{eq:binaryrelax}). Then
$$
\dis(x^*_\lambda, \Q) \to 0 \mbox{ and } f(x_\lambda^*)\to f_\Q^*, \; \mbox{as } \lambda\to\infty.
$$
\end{proposition}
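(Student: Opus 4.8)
The plan is the classical quadratic-penalty (``$\Gamma$-convergence'') argument for (\ref{eq:binaryrelax}) viewed as the penalized form of (\ref{eq:quant2}), using only that $\frac{\lambda}{2}\dis(\cdot,\Q)^2$ is nonnegative, vanishes exactly on $\Q$, and is nondecreasing in $\lambda$. Write $F_\lambda(x):=f(x)+\frac{\lambda}{2}\dis(x,\Q)^2$ and fix $\bar x\in\Q$ with $f(\bar x)=f_\Q^*$ (attainment is part of the hypothesis that $f_\Q^*=\min_{x\in\Q}f$ exists). Since $\dis(\bar x,\Q)=0$, comparing the minimizer $x^*_\lambda$ against $\bar x$ gives the basic inequality
\begin{equation*}
f(x^*_\lambda)+\tfrac{\lambda}{2}\dis(x^*_\lambda,\Q)^2=F_\lambda(x^*_\lambda)\le F_\lambda(\bar x)=f_\Q^*,
\end{equation*}
so in particular $f(x^*_\lambda)\le f_\Q^*$ for all $\lambda>0$, hence $\limsup_{\lambda\to\infty}f(x^*_\lambda)\le f_\Q^*$.

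To get $\dis(x^*_\lambda,\Q)\to0$ I need the residual $f_\Q^*-f(x^*_\lambda)$ on the right to stay bounded. The cheap route is to assume $f$ is bounded below (true for the cross-entropy / least-squares losses in play), which turns the basic inequality into $\frac{\lambda}{2}\dis(x^*_\lambda,\Q)^2\le f_\Q^*-\inf_{\R^n}f$ and then $\dis(x^*_\lambda,\Q)^2\le C/\lambda\to0$. To avoid even that, I would first record the standard monotonicities: pairing the optimality inequalities $F_{\lambda_1}(x^*_{\lambda_1})\le F_{\lambda_1}(x^*_{\lambda_2})$ and $F_{\lambda_2}(x^*_{\lambda_2})\le F_{\lambda_2}(x^*_{\lambda_1})$ for $0<\lambda_1<\lambda_2$ and cancelling the $f$-terms yields $(\lambda_2-\lambda_1)\bigl(\dis(x^*_{\lambda_1},\Q)^2-\dis(x^*_{\lambda_2},\Q)^2\bigr)\ge0$, so $\lambda\mapsto\dis(x^*_\lambda,\Q)$ is nonincreasing, and substituting this back into one of the inequalities shows $\lambda\mapsto f(x^*_\lambda)$ is nondecreasing; then for any fixed $\lambda_0>0$ and all $\lambda\ge\lambda_0$ the basic inequality gives $\frac{\lambda}{2}\dis(x^*_\lambda,\Q)^2\le f_\Q^*-f(x^*_\lambda)\le f_\Q^*-f(x^*_{\lambda_0})$, and dividing by $\lambda$ and letting $\lambda\to\infty$ forces $\dis(x^*_\lambda,\Q)\to0$.

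For the convergence of the objective values, let $z_\lambda:=\proj(x^*_\lambda)\in\Q$, so that $\|x^*_\lambda-z_\lambda\|=\dis(x^*_\lambda,\Q)\to0$ and $f(z_\lambda)\ge f_\Q^*$ for every $\lambda$. Continuity of $f$ should transfer the limit, $|f(x^*_\lambda)-f(z_\lambda)|\to0$, whence $\liminf_{\lambda\to\infty}f(x^*_\lambda)\ge f_\Q^*$; combined with the $\limsup$ bound above this yields $f(x^*_\lambda)\to f_\Q^*$.

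The one delicate point — and the main obstacle — is exactly this last transfer: $\|x^*_\lambda-z_\lambda\|\to0$ does not by itself imply $|f(x^*_\lambda)-f(z_\lambda)|\to0$ for a merely continuous $f$ unless the points lie in a common bounded region (i.e. one effectively needs uniform continuity there); note that $\dis(\cdot,\Q)^2$ is not coercive, since $\Q$ contains entire lines, so any such boundedness must come from $f$. I would handle it by passing to a convergent subsequence $x^*_{\lambda_k}\to x^\infty$ (assuming the minimizers stay bounded, or adding a mild level-boundedness/coercivity hypothesis on $f$): then $z_{\lambda_k}\to x^\infty$ as well, $x^\infty\in\Q$ because $\Q$ is closed, so $f(x^\infty)\ge f_\Q^*$, while $f(x^*_{\lambda_k})\to f(x^\infty)$ by continuity; together with $f(x^*_\lambda)\le f_\Q^*$ this gives $f(x^\infty)=f_\Q^*$, and since $\dis(x^*_\lambda,\Q)$ and $f(x^*_\lambda)$ are monotone in $\lambda$ the subsequential conclusions upgrade to the full-limit conclusions claimed.
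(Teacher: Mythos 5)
Your proposal is correct and follows essentially the same quadratic-penalty argument as the paper: compare the penalized objective at $x^*_\lambda$ against a minimizer over $\Q$ to get $f(x^*_\lambda)+\frac{\lambda}{2}\dis(x^*_\lambda,\Q)^2\le f_\Q^*$, use boundedness of $f$ from below to conclude $\dis(x^*_\lambda,\Q)\le\sqrt{2(f_\Q^*-f^*)/\lambda}\to 0$, and then pass to the projection $z_\lambda=\proj(x^*_\lambda)$ and invoke continuity of $f$. The one remark worth adding is that the ``delicate point'' you flag is real: the paper's own proof simply asserts $f(z_\lambda)\to f(x^*_\lambda)$ from pointwise continuity, which, as you observe, requires the family $\{x^*_\lambda\}$ to remain bounded (or $f$ to be uniformly continuous near $\Q$), so your subsequence/level-boundedness step is a legitimate tightening of the argument rather than a detour.
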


\begin{rem}
Relaxation via Moreau envelope leads to a quadratic penalty formulation. An interesting but different quadratic penalty was considered in \cite{carreira1} for the general compression problem of DNNs. In fact, by replacing the Euclidean distance $\|\cdot\|$ in Moreau envelope
with the metric $\|\cdot\|_D$ induced by some matrix $D$, one can derive a more general penalty
$$
\inf_{z\in\mathcal{A}} \; \frac{1}{2}\|z-x\|_D^2.
$$
We refer the readers to the recent paper \cite{GPS} for successful application of such penalty to phase retrieval problem.
\end{rem}

\subsection{Algorithm}
Inspired by the hybrid gradient update proposed in \cite{bc_15}, we write a two-line solver for the minimization problem (\ref{eq:binaryrelax}):
\begin{equation}\label{eq:relax_proj}
\begin{cases}
y^{k+1} = y^k - \gk \nabla f_k(x^{k}) \\
x^{k+1} = \arg\min_{x\in\R^n} \frac{1}{2}\, \|x-y^{k+1}\|^2 + \frac{\lambda}{2} \dis(x,\Q)^2.
\end{cases}
\end{equation}
The algorithm constructs two sequences: an auxiliary sequence of float weights $\{y^k\}$ and a sequence of \emph{nearly} quantized weights $\{x^k\}$. The mismatch of discontinuous projection and continuous gradient descent is resolved by the relaxed quantization step in (\ref{eq:relax_proj}),
which calls for computing the proximal mapping of the function $\frac{\lambda}{2}\dis(x,\Q)^2$. This can be done via the following closed-form formula.
\begin{proposition}\label{prop:approx_quant}
Let 
$$
\proj(y^{k+1}) = \arg\min_{x\in\Q} \|x-y^{k+1}\|^2
$$ 
be the quantization of $y^{k+1}$, then the solution to relaxed quantization subproblem in (\ref{eq:relax_proj}) is given by
\begin{equation}\label{eq:relax_quant}
x^{k+1} = \frac{\lambda\, \proj(y^{k+1}) + y^{k+1}}{\lambda+1}.
\end{equation}
\end{proposition}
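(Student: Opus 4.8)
The plan is to recognize the relaxed quantization subproblem in (\ref{eq:relax_proj}) as a joint minimization in disguise. Writing $y=y^{k+1}$ and $P=\proj(y)$ for brevity, I would first use $\dis(x,\Q)^2=\min_{z\in\Q}\|x-z\|^2$ (the minimum being attained since $\Q$ is closed) to rewrite the objective as $\min_{z\in\Q}\bigl[\tfrac12\|x-y\|^2+\tfrac{\lambda}{2}\|x-z\|^2\bigr]$. Minimizing over $x\in\R^n$ then turns the whole problem into the joint minimization $\min_{x\in\R^n,\,z\in\Q}\ \tfrac12\|x-y\|^2+\tfrac{\lambda}{2}\|x-z\|^2$, and I would swap the order of minimization, optimizing over $x$ first with $z$ held fixed.

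For fixed $z$, the inner objective is a strictly convex coercive quadratic in $x$; setting its gradient $(x-y)+\lambda(x-z)$ to zero yields the unique minimizer $x(z)=\frac{y+\lambda z}{1+\lambda}$. Substituting back, using $x(z)-y=\frac{\lambda(z-y)}{1+\lambda}$ and $x(z)-z=\frac{y-z}{1+\lambda}$, the value collapses — after combining the two squared-norm terms over the common denominator $(1+\lambda)^2$ — to $\frac{\lambda}{2(1+\lambda)}\|z-y\|^2$. Since $\frac{\lambda}{2(1+\lambda)}>0$ is a fixed constant, the remaining minimization over $z\in\Q$ is precisely $\min_{z\in\Q}\|z-y\|^2$, whose minimizer is by definition $z^\star=\proj(y)=P$. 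Plugging $z^\star$ into $x(z)$ gives $x^{k+1}=\frac{\lambda P+y}{\lambda+1}$, which is exactly (\ref{eq:relax_quant}).

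I do not expect a serious obstacle here: the argument is essentially a completion of squares once the joint formulation is in place. The only points deserving a line of justification are that the inner infimum defining $\dis(x,\Q)^2$ is actually attained (closedness of $\Q$) and that interchanging the two infima is legitimate — but the latter holds for infima over a product set with no extra hypotheses, and the quadratics involved are coercive, so the joint minimum is genuinely attained. If the projection $\proj(y)$ happens to be non-unique, any selection works and the formula holds verbatim for that selection, consistent with the convention already adopted in the discussion following (\ref{eq:proj}).
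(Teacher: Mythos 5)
Your proposal is correct and follows essentially the same route as the paper's own proof: rewrite the squared distance as an inner minimum over $z\in\Q$, swap the two minimizations, solve the inner quadratic in $x$ to get $x(z)=\frac{\lambda z+y}{1+\lambda}$, and observe that the reduced objective in $z$ is a positive multiple of $\|z-y\|^2$, so $z^\star=\proj(y)$. Your added remarks on attainment, the legitimacy of exchanging infima, and non-uniqueness of the projection are correct refinements of the same argument.
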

Note that we still need the exact quantization $\proj(y^{k+1})$ to perform relaxed quantization. The update $x^{k+1}$ is essentially a linear interpolation between $y^{k+1}$ and its quantization $\proj(y^{k+1})$, and $\lambda$ controls the weighted average. $x^{k+1}$ is thus not quantized because $x^{k+1}\not\in \Q$, but $x^{k+1}$ approaches $\Q$ as $\lambda$ increases. Hereby we adopt a continuation strategy and let $\lambda $ grow slowly. Specifically, we inflate $\lambda$ after a certain number of epochs by a factor $\rho > 1$. Intuitively, the relaxation with continuation will help skip over some bad local minima of (\ref{eq:quant2}) located in $\Q$ , because they are not local minima of the relaxed formulation in general.
\begin{proposition}\label{thm:local}
Suppose $f(x)$ is differentiable. Any point $x^*\in\Q$ is not a local minimizer of the relaxed quantization problem (\ref{eq:binaryrelax}) unless $\nabla f(x^*)= \0$.
\end{proposition}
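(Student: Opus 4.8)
The plan is to reduce the statement to Fermat's first‑order rule by showing that the regularizer $\frac{\lambda}{2}\dis(x,\Q)^2$ is not merely continuous but in fact Fréchet differentiable at every point of $\Q$, with \emph{vanishing} gradient there. Granted this, the objective $F(x):=f(x)+\frac{\lambda}{2}\dis(x,\Q)^2$ of \eqref{eq:binaryrelax} inherits differentiability at any $x^*\in\Q$ from $f$, with $\nabla F(x^*)=\nabla f(x^*)$; then a local minimizer of the unconstrained problem \eqref{eq:binaryrelax} must satisfy $\nabla F(x^*)=\0$, i.e.\ $\nabla f(x^*)=\0$, and the contrapositive is exactly the claim.

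First I would record the elementary fact that $x\mapsto\dis(x,\Q)$ is $1$‑Lipschitz on $\R^n$ for any closed set $\Q$: for $x\in\R^n$ and $z\in\Q$ one has $\dis(x,\Q)\le\|x-z\|\le\|x-x^*\|+\|x^*-z\|$, and minimizing over $z\in\Q$ gives $\dis(x,\Q)\le\|x-x^*\|+\dis(x^*,\Q)$. Taking $x^*\in\Q$, so that $\dis(x^*,\Q)=0$, yields the sandwich $0\le\dis(x,\Q)\le\|x-x^*\|$ for all $x$, hence $0\le\frac{\lambda}{2}\dis(x,\Q)^2\le\frac{\lambda}{2}\|x-x^*\|^2$. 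Writing $h(x):=\frac{\lambda}{2}\dis(x,\Q)^2$, we have $h(x^*)=0$ and $|h(x)-h(x^*)-\la\0,\,x-x^*\ra|/\|x-x^*\| = h(x)/\|x-x^*\| \le \frac{\lambda}{2}\|x-x^*\|\to 0$ as $x\to x^*$, which is precisely the definition of differentiability of $h$ at $x^*$ with $\nabla h(x^*)=\0$.

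Combining the two steps, since $f$ is differentiable by hypothesis, $F=f+h$ is differentiable at $x^*$ with $\nabla F(x^*)=\nabla f(x^*)$; if $x^*\in\Q$ were a local minimizer of \eqref{eq:binaryrelax}, Fermat's rule forces $\nabla F(x^*)=\0$, i.e.\ $\nabla f(x^*)=\0$. I do not expect a genuine obstacle here; the one point worth flagging is that $h$ is generally \emph{not} differentiable away from $\Q$ (for instance at points equidistant from two of the subspaces $\Li_i$), but that non‑smoothness never intervenes because we only ever differentiate at a point of $\Q$. A more hands‑on route is also available using the decomposition $\Q=\bigcup_i\Li_i$: near a nonzero $x^*\in\Li_j$ one has $\dis(\cdot,\Q)=\dis(\cdot,\Li_j)$, and $\tfrac12\dis(\cdot,\Li_j)^2$ is a quadratic whose gradient is the component orthogonal to $\Li_j$, which vanishes on $\Li_j$ — but the Lipschitz‑sandwich argument above handles the case $x^*=\0$ uniformly and is shorter, so that is the version I would write up.
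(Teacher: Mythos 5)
Your proof is correct and rests on exactly the same key estimate as the paper's, namely that $x^*\in\Q$ gives $\dis(x,\Q)\le\|x-x^*\|$ and hence the penalty is bounded by $\frac{\lambda}{2}\|x-x^*\|^2$ near $x^*$. The paper simply unrolls your Fermat-rule step by hand --- setting $x=x^*-\beta\nabla f(x^*)$, Taylor-expanding $f$, and letting $\beta\to 0$ to reach a contradiction --- whereas you package the same quadratic bound as Fr\'echet differentiability of the penalty at $x^*$ with $\nabla h(x^*)=\0$; the two arguments are essentially identical.
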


In order to obtain quantized weights in the end, we turn off the relaxation mode and enforce quantization. The BinaryRelax algorithm is summarized in Alg.~\ref{alg}.

\begin{algorithm}
\caption{BinaryRelax.}\label{alg}
\textbf{Input}: number of epochs for training, batch size, schedule of learning rate $\{\gamma_k\}$, growth factor $\rho>1$.
\begin{algorithmic}
    \FOR {i = 1, 2,$\dots$, nb-epoch} 
    \STATE Randomly shuffle the data and partition into batches.
    \FOR {j = 1, 2, $\dots$, nb-batch}
    \STATE $y^{k+1} = y^{k} - \gk \nabla f_k(x^k) $
    \IF {$i \leq T$}
    \STATE $x^{k+1} = \frac{\lambda_{k}\proj(y^{k+1}) + y^{k+1}}{\lambda_{k}+1}$ \quad $//$ Phase I
    \IF {increase $\lambda$}
    \STATE $\lambda_{k+1} = \rho\lambda_k$
    \ELSE
    \STATE $\lambda_{k+1} = \lambda_k$
    \ENDIF
    \ELSE
    \STATE $x^{k+1} = \proj(y^{k+1})$ \quad $//$ Phase II
    \ENDIF
    \STATE $k = k+1$
    \ENDFOR
    \ENDFOR
\end{algorithmic}
\end{algorithm}

\begin{rem}
For BinaryRelax, we replace a discrete quantization constraint with a continuous regularizer. The similar idea of relaxing the discrete sparsity constraint $\|x\|_0\leq s$ into a continuous and possibly non-convex sparse regularizer has been long known in the contexts of statistics and compressed sensing \cite{lasso,scad,cs}. For example, compressed sensing solvers for minimizing the convex $\ell_1$ norm \cite{GO_09} or non-convex sparse proxies, such as $\ell_{1/2}$ (with smoothing) \cite{irls} and $\ell_{1-2}$ \cite{l1_2}, often empirically outperform those directly tackling the nonzero counting metric $\ell_0$. Interestingly, similar to the quantization set $\Q$, the sparsity constraint set $\{x\in\R^n: \|x\|_0\leq s  \}$ is also a finite union of low-dimensional subspaces in $\R^n$. More precisely, 
$$
\{x\in\R^n: \|x\|_0\leq s  \} = \bigcup_{\mathcal{S}\subset\{1,\dots,n\}, \, |\mathcal{S}|=s}\{x\in\R^n: \mathrm{supp}(x)\subseteq \mathcal{S}\},
$$
where $\mathrm{supp}(x)$ denotes the support of $x$, and each member in the union is a $s$-dimensional subspace with $s\ll n$.
\end{rem}

\begin{rem}
BinaryRelax resembles the linearized Bregman algorithm proposed by Yin et al. \cite{yin2010analysis,linBreg} for solving the basis pursuit problem \cite{chen2001atomic,cs}
\begin{equation*}
\min_{x\in\R^n} \; \|x\|_1 \quad \mbox{subject to} \quad Ax = b,
\end{equation*}
by iterating
\begin{equation*}
\begin{cases}
y^{k+1} =y^k - \tau_k A^\top(A x^k - b) \\
x^{k+1} =\delta \cdot \mathrm{shrink}(y^{k+1},\mu)
\end{cases}
\end{equation*}
where $\delta, \, \mu, \, \tau_k>0$ are algorithmic parameters. In linearized Bregman, $A^\top(A x - b)$ is the gradient of sum of squares error $\frac{1}{2}\|A x - b\|^2$, and $\mathrm{shrink}(y,\mu)$ is the proximal mapping of $\ell_1$ norm (a.k.a. soft-thresholding operator \cite{shrink}):
$$
\mathrm{shrink}(y,\mu):= \arg\min_u \; \frac{1}{2\mu}\|u-y\|^2 + \|u\|_1.
$$
With that said, linearized Bregman also iterates between some sort of hybrid gradient step and proximal mapping. However, it is not exactly the same as BinaryRelax, as there is a scaling by $\delta$ in the proximal step. 
\end{rem}

\subsection{Connection to BinaryConnect}
In fact, the Phase II of BinaryRelax 
\begin{equation}\label{eq:binaryconnect}
\begin{cases}
y^{k+1} = y^k - \gk \nabla f_k(x^{k}) \\
x^{k+1} = \proj(y^{k+1})
\end{cases}
\end{equation}
is exactly the BinaryConnect scheme \cite{bc_15}. The performance of BinaryRelax, however, mostly relies on Phase I training. As will be seen from the experimental results reported in section \ref{sec:exper}, the gain from Phase II training is very limited. Switching to BinaryConnect in Phase II is just to get truly quantized weights.

\bigskip


\section{Experimental Results}\label{sec:exper}
We tested BinaryRelax on benchmark CIFAR \cite{cifar_09} and ImageNet \cite{imagenet_09} color image datasets. The two baselines are the BinaryConnect framework  (\ref{eq:binaryconnect}) combined with the exact binarization formula (\ref{eq:bin}) (BWN) \cite{xnornet_16} and the heuristic ternarization scheme (\ref{eq:approx_ter}) (TWN) \cite{twn_16}, resp.. We used the same quantization formulas for BinaryRelax in the relaxed quantization update (\ref{eq:relax_quant}). Both algorithms were initialized with the weights of a pre-trained float model.

\subsection{The selection of \texorpdfstring{$\lambda$}{Lg}} We always initialize the relaxation parameter $\lambda_0 = 1$. We split into roughly 4/5 and 1/5 of the training epochs for Phase I and Phase II, resp.. To guarantee the smooth transitioning to Phase II from Phase I, a proper growth factor $\rho>1$ is chosen so that $\lambda\in(100,200)$ at the moment Phase I ends. A relatively small $\lambda$ will result in noticeable drop in accuracy when Phase II starts.


\subsection{CIFAR datasets}
The CIFAR-10 dataset consists of 60000 32$\times$32 color images in 10 classes, with 6,000 images per class. There are 50,000 training images and 10,000 test images. CIFAR-100 dataset is just like the CIFAR-10, except it has 100 classes containing 600 images each. There are 500 training images and 100 test images per class. Fig.~\ref{fig:img} shows some sample images from CIFAR datasets. In the experiments, we used the testing images for validation. We coded up the BinaryRelax in PyTorch \cite{pytorch} platform. The experiments were carried out on two desktops with Nvidia graphics cards GTX 1080 Ti and Titan X, resp..

\begin{figure}
\begin{center}
\begin{tabular}{cc}
\textbf{CIFAR-10} & \textbf{CIFAR-100} \\
\includegraphics[scale=0.45]{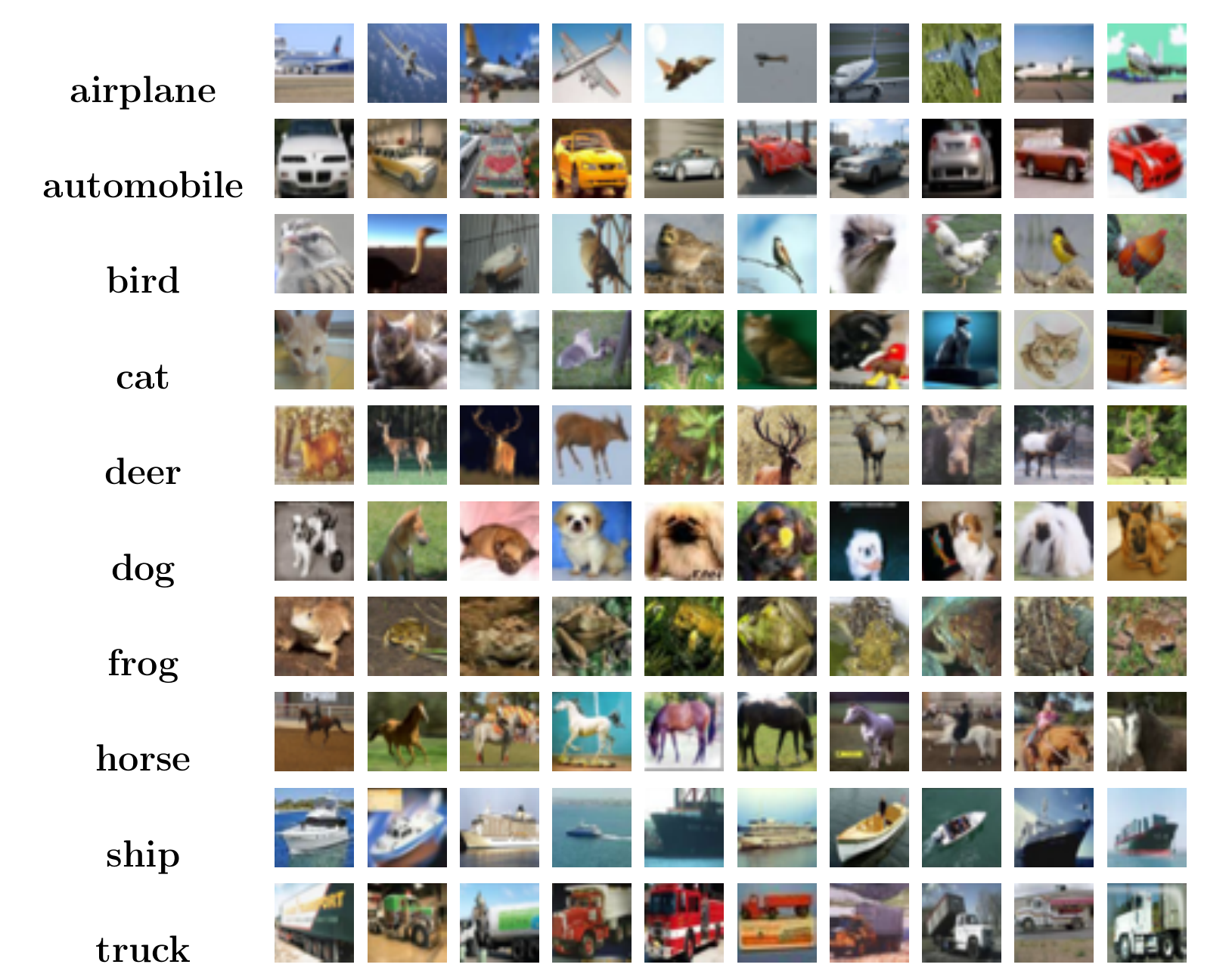} & 
\includegraphics[scale=0.45]{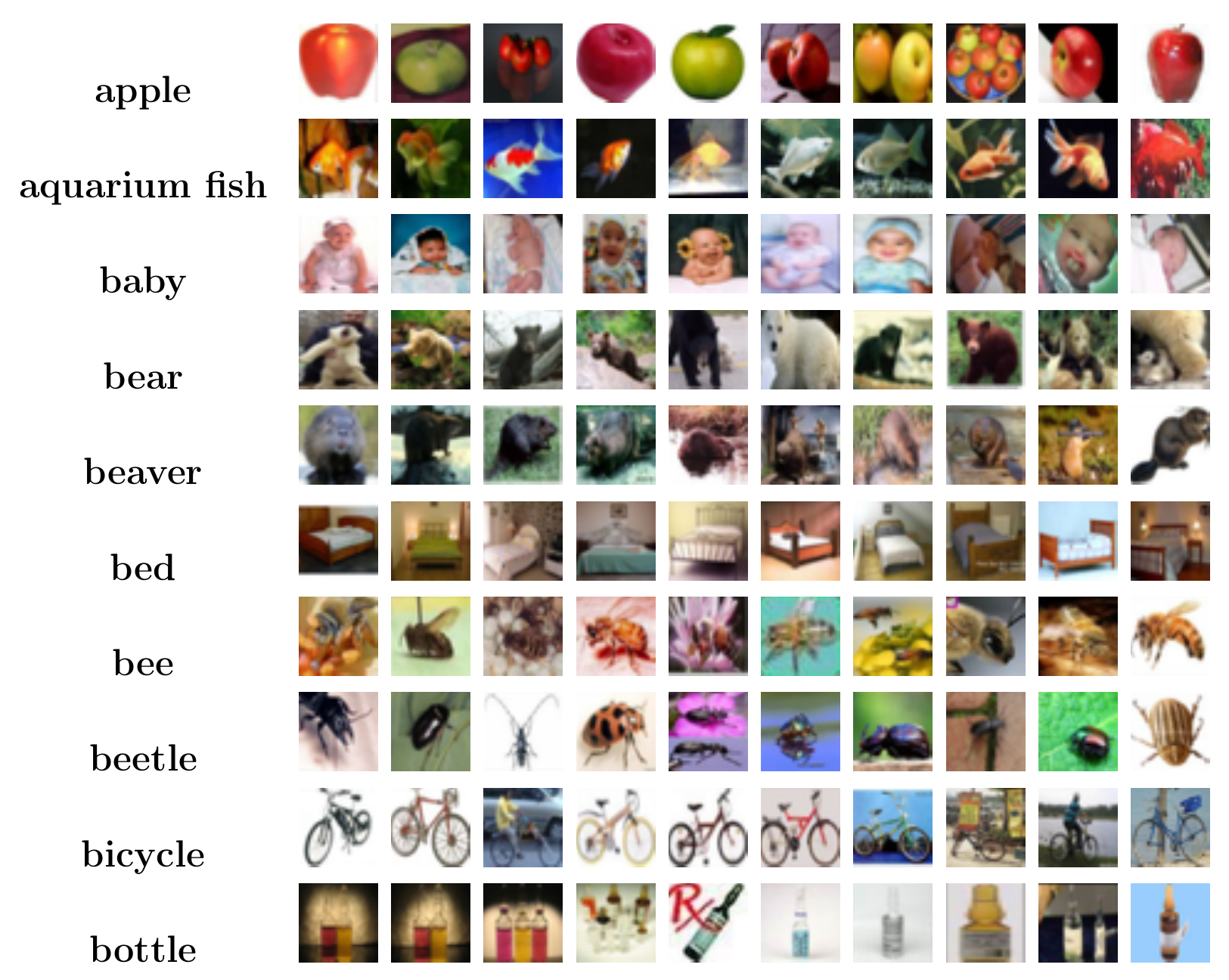}
\end{tabular}
\end{center}
\caption{Sample images from CIFAR datasets: 10 classes in CIFAR-10 (left); 10 out of 100 classes in CIFAR-100 (right).}\label{fig:img}
\end{figure}

We ran 300 epochs. The initial learning rate $\gamma_0= 0.1$ with decay by a factor of 0.1 at epochs $\{120, 220\}$. Phase II starts at epoch 240. $\lambda$ increases by a factor of $\rho = 1.02$ after every epoch. In addition, we used batch size $= 128$, $\ell_2$ weight decay $=10^{-4}$, batch normalization \cite{bn_15}, and momentum $= 0.95$. 

We tested the algorithms on the popular VGG \cite{vgg_14} and ResNet\cite{resnet_15} architectures, and the validation accuracies for CIFAR-10 and CIFAR-100 are summarized in Tab. \ref{tab:cifar10} and Tab. \ref{tab:cifar100}, resp.. Note that ResNet-18 and ResNet-34 tested here were originally constructed for the more challenging ImageNet classification \cite{imagenet_09} and then adapted for CIFAR datasets. They have wider channels in the convolutional layers and are much larger than the other ResNets. For example, ResNet-18 has $\sim 11$ million parameters, whereas ResNet-110 has only $\sim 1.7$ million. This explains their higher accuracies. All quantized networks were initialized from their full-precision counterparts whose validation accuracies are listed in the second column. Fig. \ref{fig:curve} shows the validation accuracies for CIFAR-100 tests with VGG-16 and ResNet-34 during the training process. For the VGG-16 tests, we notice the decay of the validation accuracies of BinaryRelax occurs in Phase I training. This is due to the increase of the parameter $\lambda$, which makes the regularization more and more stringent. With approximately the same training cost, our relaxed quantization approach consistently outperforms the hard quantization counterpart in validation accuracies. As seen from the tables and figure, the advantage of relaxed quantization is particularly clear when it comes to the large nets ResNet-18 and ResNet-34, where we have more complex landscapes with spurious local minima. In this case, our accuracies of binarized networks even surpass that of TWN. The relaxation indeed helps skip over bad local minima during the training.

\begin{table*}[ht]
\label{tab:cifar10}
\centering
\begin{tabular}{|c|c|c|c|c|c|c|c|}
  \hline			
 \multirow{2}{*}{\textbf{CIFAR-10}} & \multirow{2}{*}{Float} & \multicolumn{2}{c|}{Binary} & \multicolumn{2}{c|}{Ternary}  \\
  \cline{3-6}
  & & BWN & Ours & TWN & Ours \\
  \hline
  VGG-11 & 91.93 & 88.70 & {\bf 89.28} & 90.48& {\bf 91.01} \\
  \hline
  VGG-16 & 93.59 & 91.60 & {\bf 91.98} & 92.75 & {\bf 93.20}\\
  \hline
  ResNet-20 & 92.68 & 87.44  & {\bf 87.82} & 88.65 & {\bf 90.07} \\
  \hline
  ResNet-32 & 93.40  & 89.49 & {\bf 90.65} & 90.94 & {\bf 92.04} \\
  \hline
  ResNet-18 & 95.49 & 92.72 & {\bf 94.19}  & 93.55 & {\bf 94.98} \\
  \hline
  ResNet-34 & 95.70 & 93.25 & {\bf 94.66} & 94.05 & {\bf 95.07}  \\
  \hline
\end{tabular}
\medskip
\caption{CIFAR-10 validation accuracies.}
\end{table*}

\begin{table*}[ht]
\label{tab:cifar100}
\centering
\begin{tabular}{|c|c|c|c|c|c|c|c|}
  \hline			
 \multirow{2}{*}{\textbf{CIFAR-100}} & \multirow{2}{*}{Float} & \multicolumn{2}{c|}{Binary} & \multicolumn{2}{c|}{Ternary}  \\
  \cline{3-6}
  & & BWN & Ours & TWN & Ours \\
  \hline
  VGG-11 & 70.43& 62.35 & {\bf 63.82} & 64.16 & {\bf 65.87}\\
  \hline
  VGG-16 & 73.55 & 69.03  & {\bf 70.14} &  71.41 & {\bf 72.10}\\
  \hline
  ResNet-56 & 70.86 & 66.73  & {\bf 67.65}  &  68.26 & {\bf 69.83} \\
  \hline
  ResNet-110 & 73.21 & 68.67  & {\bf 69.85} & 68.95  & {\bf 72.32} \\
  \hline
  ResNet-18 & 76.32 & 72.31 & {\bf 74.04}  & 73.15 & {\bf 75.24} \\
  \hline
  ResNet-34 & 77.23 & 72.92 & {\bf 75.62}  & 74.43 & {\bf 76.16} \\
  \hline
\end{tabular}
\medskip
\caption{CIFAR-100 validation accuracies.}
\end{table*}

\begin{figure}[ht]
\centering
\begin{tabular}{cc}
\textbf{VGG-16 Binary} & \textbf{VGG-16 Ternary} \\
\includegraphics[scale=0.5]{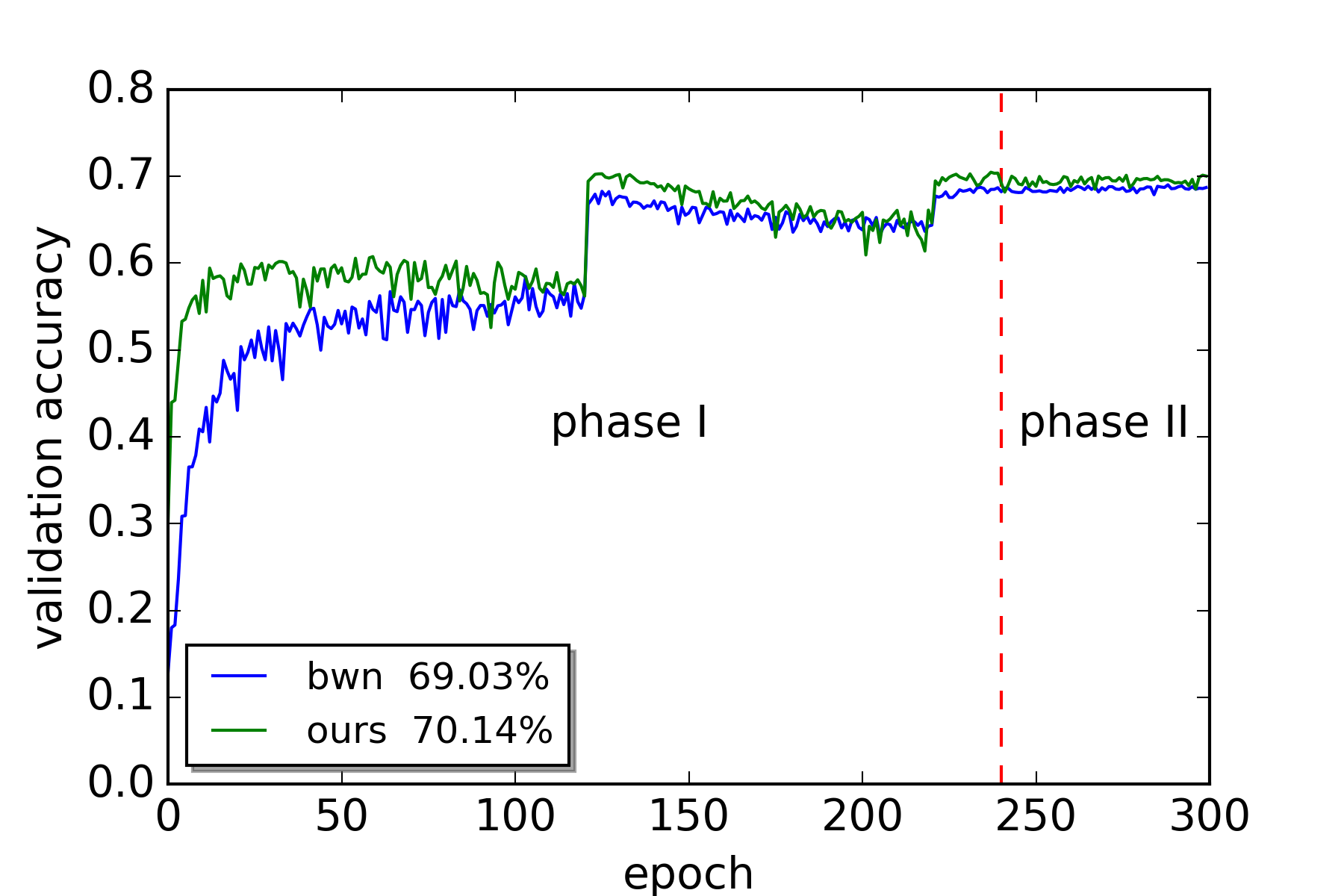} &
\includegraphics[scale=0.5]{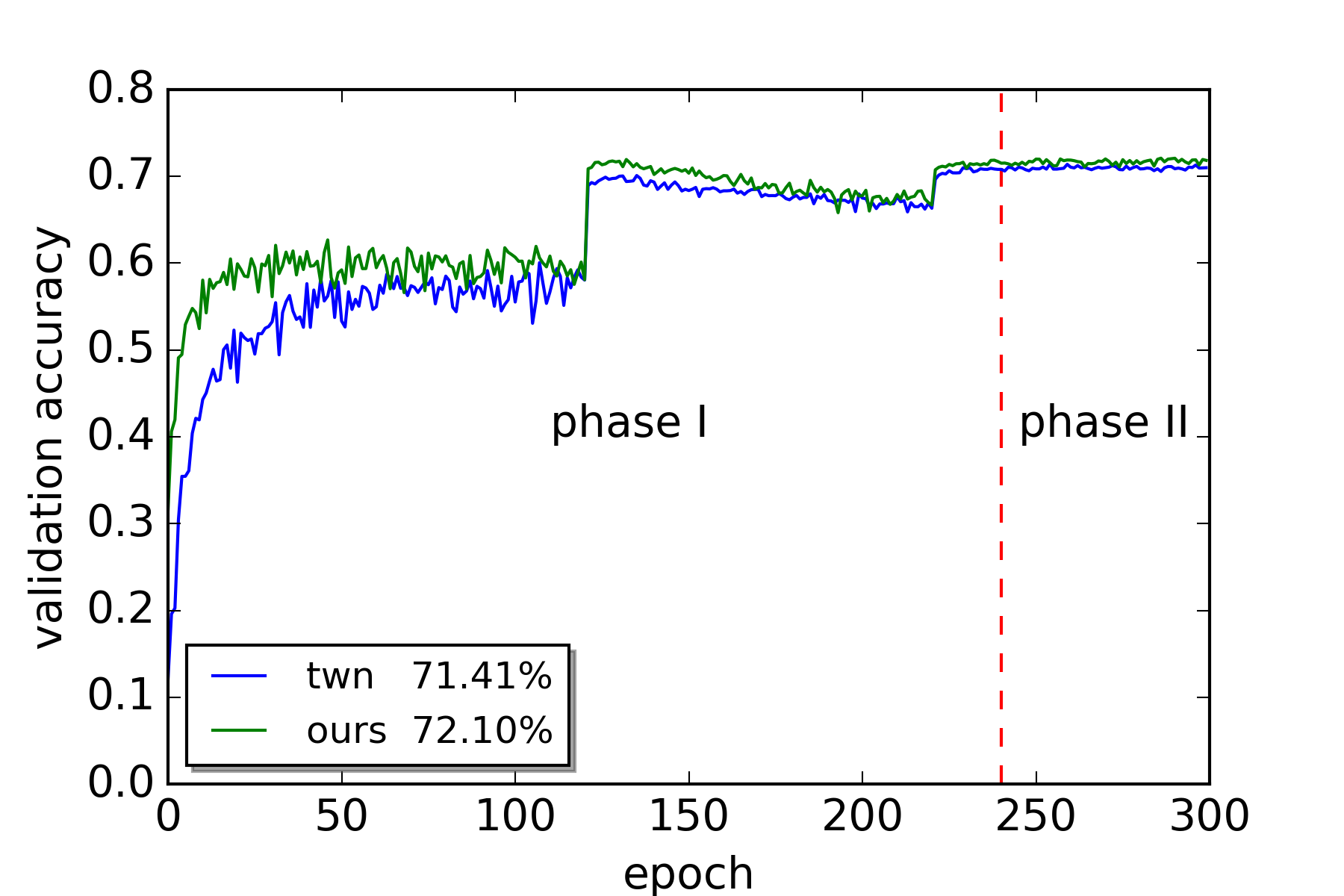}  \\
\textbf{ResNet-34 Binary} & \textbf{ResNet-34 Ternary} \\
\includegraphics[scale=0.5]{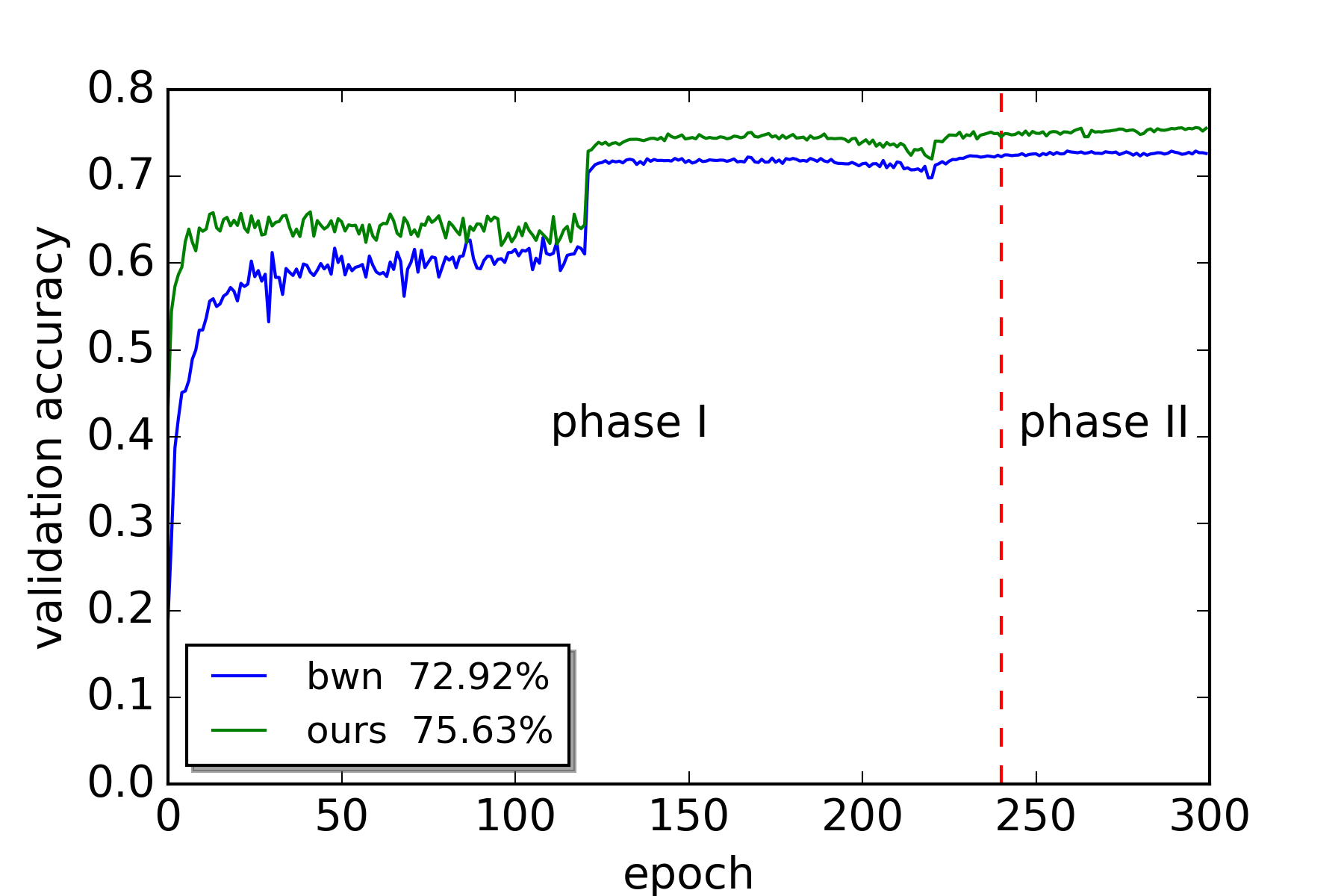} &
\includegraphics[scale=0.5]{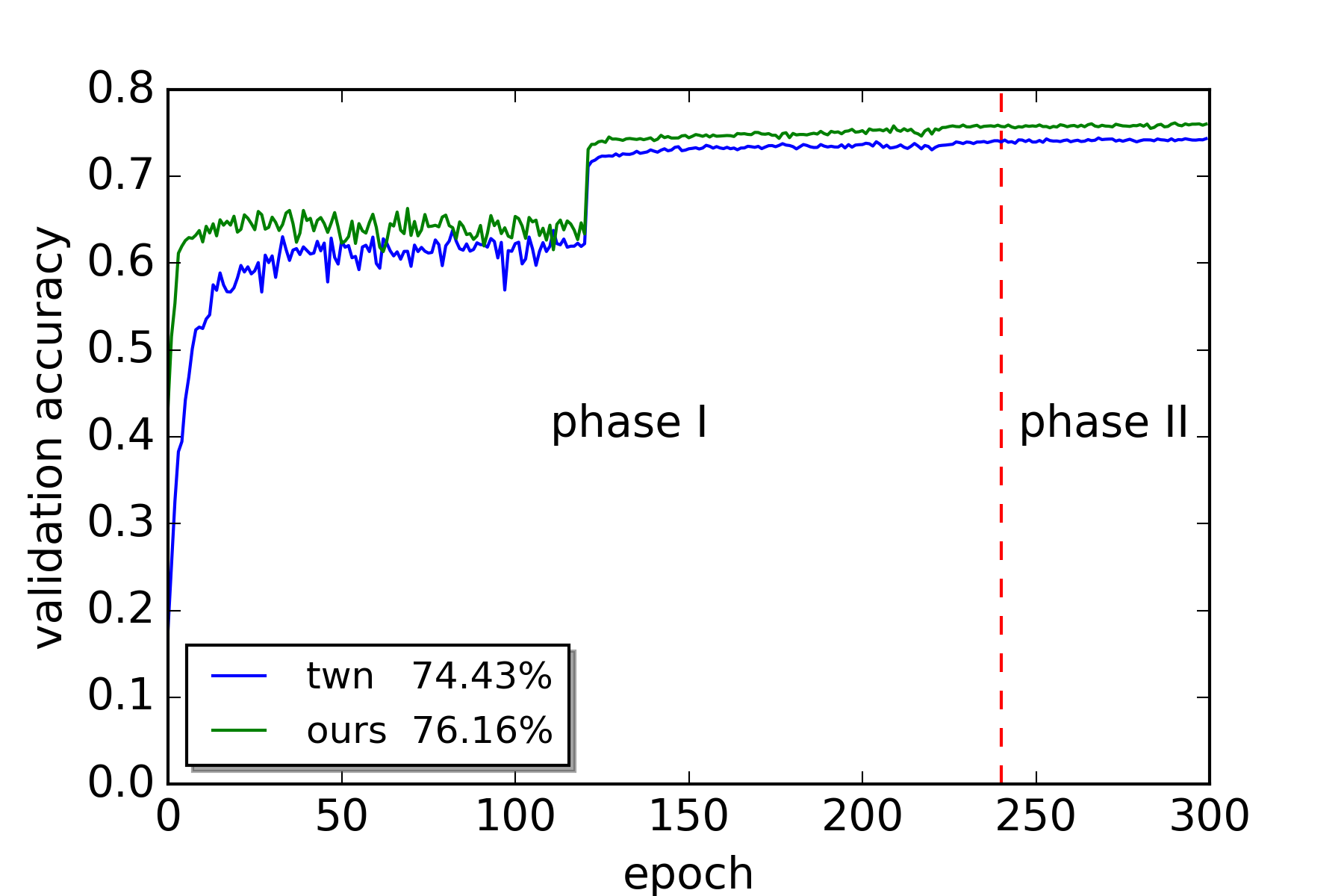}  \\
\end{tabular}
\caption{Comparisons of validation accuracy curves for CIFAR-100 using VGG-16 and ResNet-34. The initial learning rate $\gamma_0=0.1$ and decays by a factor of $0.1$ at epoch $120$ and $220$. The initial regularization parameter $\lambda_0 = 1$ and grows by a factor of $\rho = 1.02$ after each epoch until epoch $240$ where Phase II starts.}
\label{fig:curve}
\end{figure}

\subsection{ImageNet}
ImageNet (ILSVRC12) dataset \cite{imagenet_09} is a benchmark for large-scale image classification task, which has $1.2$ million images for training and $50,000$ for validation of 1,000 categories. We quantize ResNet-18 at bit-widths 1 (binary) and 2 (ternary). The experiments were carried out on a machine with 8 Nvidia GeForce GTX 1080 Ti GPUs.

We initialized BinaryRelax with the pre-trained full-precision (32-bit) models available from the PyTorch torchvision package \cite{pytorch}. We trained in total $70$ epochs, with phase II starting at epoch 55. The initial learning rate $\gamma_0 = 0.1$ and decays by a factor of $0.1$ at epochs $\{30, 40, 50\}$. Relaxation parameter $\lambda$ starts at 1 and increases by a growth factor of $\rho = 1.045$ after each half (1/2) epoch. In all these experiments, we used momentum$=0.9$ and weight decay $=10^{-4}$. The comparison results with BWN and TWN are listed in Tab. \ref{tab:imagenet}.

\begin{table}[ht]
\centering
\begin{tabular}{|c|c|c|c|c|}
  \hline			
 Network & Bit-width & Method & Top-1 & Top-5 \\
  \hline
  \multirow{5}{*}{ResNet-18} 
  & 32 (float)
  	&  & 69.6 & 89.0 \\  
  	\cline{2-5}
  & \multirow{2}{*}{1 (binary)} 
  	& BWN  & 60.8 & 83.0 \\ 
    	\cline{3-5} &	
    & Ours & {\bf 63.2} & {\bf 85.1} \\  
  	\cline{2-5}
  & \multirow{2}{*}{2 (ternary)} 
  	& TWN  & 61.8 & 84.2 \\ 
    	\cline{3-5} &
    & Ours & {\bf 66.5} & {\bf 87.3} \\  
    \cline{2-5}
  \hline 
\end{tabular}
\medskip
\caption{ImageNet validation accuracies.}\label{tab:imagenet}
\end{table}

\bigskip

\section{Convergence Analysis}\label{sec:converg}
In this section, we analyze the convergence property of the proposed BinaryRelax.
More precisely, we will focus on Phase II of BinaryRelax (i.e., BinaryConnect):
\begin{equation}\label{eq:bc}
\begin{cases}
y^{k+1} = y^k - \gk \nabla f_k(x^{k}) \\
x^{k+1} = \proj(y^{k+1}).
\end{cases}
\end{equation}
Although the convergence of BinaryConnect at a small learning rate is observed empirically (as seen from our experiments), the only convergence results, to our knowledge, were proved in \cite{Goldstein_17}, in terms of the objective value of float ergodic averages  $\Big\{f\Big(\frac{\sum_{i=1}^k y^i}{k}\Big)\Big\}$ under convexity assumption. Moreover, the quantization set $\Q$ considered in \cite{Goldstein_17} is quite different. They constrain the quantized weights to $\{0,\pm\Delta, \pm2\Delta,\dots\}$, where $\Delta>0$ is some fixed resolution. In this case, $\Q$ is an unbounded $\Delta$-lattice in $\mathbb{R}^n$. 
Recall from section \ref{subsec:quant}
that our $\Q$ takes the form of $\cup_{i=1}^p \Li_i$ with each $\Li_i$ being a line passing through the origin. This assumption on $\Q$ generalizes the binary and ternary cases in the existing literature such as \cite{xnornet_16,twn_16}. Without assuming the convexity of $f$, we will show the sequence $\{x^k\}$ generated by the iteration (\ref{eq:bc}) subsequentially converges in expectation to an approximate critical point. To establish the convergence, we need to exploit the property of the set $\Q$ being the union of line subspaces by introducing several technical lemmata in section \ref{subsec:prelim}. The analysis here cannot be readily extended to the setup of \cite{Goldstein_17} or other problems under general discrete constraint. 

\subsection{Preliminaries}\label{subsec:prelim} 
We have the following basic assumptions.
\begin{assump}
$f(x)$ is bounded from below. Without loss of generality, we assume the lower bound is 0.
\end{assump}
\begin{assump}
$f(x)$ is $L$-Lipschitz differentiable, i.e., for any $x, y \in\R^n$, we have
$$
\|\nabla f(x) - \nabla f(y)\| \leq L \|x-y\|.
$$
\end{assump}
\begin{assump}
$\E[\|\nabla f(x^k) - \nabla f_k(x^k) \|^2] \leq \sigma^2$ for all $k\in\mathbb{N}$, where the expectation is taken over the stochasticity of the algorithm (i.e., random selection of $f_k$).
\end{assump}

Our proof relies on the following technical lemmata that exploit the structure of set $\Q$.

\begin{lemma}[Approximate orthogonality]\label{lem:othor}
Let $\{y^k\}, \{x^k\}$ be defined in (\ref{eq:bc}). There exists $\alpha_k\geq 0$, such that 
$$
\alpha_k\|x^{k+1}-x^k\|^2 + \|y^k-x^k\|^2 = \|y^k-x^{k+1}\|^2.
$$
\end{lemma}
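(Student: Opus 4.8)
The plan is to recognize the claimed identity as the law of cosines applied to the triangle with vertices $y^k$, $x^k$, $x^{k+1}$, with $\alpha_k$ chosen to absorb the cross term, and then to check that this forced choice of $\alpha_k$ is nonnegative. Writing $y^k - x^{k+1} = (y^k - x^k) + (x^k - x^{k+1})$ and expanding the square gives
\[
\|y^k - x^{k+1}\|^2 = \|y^k - x^k\|^2 + \|x^k - x^{k+1}\|^2 + 2\la y^k - x^k,\ x^k - x^{k+1}\ra .
\]
Comparing with the target identity, when $x^{k+1}\neq x^k$ the only admissible value is
\[
\alpha_k := \frac{\|y^k - x^{k+1}\|^2 - \|y^k - x^k\|^2}{\|x^{k+1} - x^k\|^2},
\]
and with this choice the identity holds by construction. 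When $x^{k+1} = x^k$ the identity reduces to the triviality $\|y^k - x^k\|^2 = \|y^k - x^k\|^2$, so any $\alpha_k$ works and we may take $\alpha_k = 0$.

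It then remains only to show $\alpha_k \geq 0$, i.e. that $\|y^k - x^{k+1}\| \geq \|y^k - x^k\|$. This is where the iteration (\ref{eq:bc}) enters: $x^k = \proj(y^k)$ is, by definition, a minimizer of $z\mapsto\|z - y^k\|$ over $z\in\Q$, so $\|y^k - x^k\| = \dis(y^k,\Q) \leq \|y^k - z\|$ for \emph{every} $z\in\Q$. Since $x^{k+1} = \proj(y^{k+1})$ is itself a point of $\Q$, applying this inequality with $z = x^{k+1}$ gives $\|y^k - x^{k+1}\| \geq \dis(y^k,\Q) = \|y^k - x^k\|$, hence $\alpha_k\geq 0$. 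This closes the argument.

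I do not expect a genuine obstacle: the statement is essentially a bookkeeping identity. The only points needing a little care are the degenerate case $x^{k+1}=x^k$ (dispatched by $\alpha_k=0$) and making explicit that $x^{k+1}\in\Q$ — which is exactly why the lemma is stated for Phase II, where the update is an exact projection onto $\Q$ rather than the relaxed interpolation of Phase I. It is worth remarking that the union-of-lines structure of $\Q$ is \emph{not} used here; that structure will instead be needed in the companion lemmata to bound the size of $\alpha_k$ (equivalently, to relate $\|x^{k+1}-x^k\|$ to the gradient step), which is what ultimately drives the convergence proof.
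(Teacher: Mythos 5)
Your proof is correct and takes essentially the same route as the paper's: the identity forces $\alpha_k = \bigl(\|y^k-x^{k+1}\|^2 - \|y^k-x^k\|^2\bigr)/\|x^{k+1}-x^k\|^2$, and nonnegativity follows because $x^k=\proj(y^k)$ minimizes the distance to $y^k$ over $\Q$ while $x^{k+1}\in\Q$. You are in fact slightly more explicit than the paper, which leaves the definition of $\alpha_k$ and the degenerate case $x^{k+1}=x^k$ implicit.
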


\begin{proposition}\label{prop:alpha}
Let $\theta_{\min}$ be the smallest angle formed by any two line subspaces in $\Q$. If $\|x^{k+1} -x^k\|<\|x^k\| \sin \theta_{\min}$, then $\alpha_k = 1$ in Lemma \ref{lem:othor}. Moreover, $\alpha_k$ may have to be $0$ only when $\|y^k-x^k\| = \|y^k-x^{k+1}\|$ and $\nabla f_k(x^k) \perp \Li_i$ with $\Li_i$ containing $x^{k+1}$.
\end{proposition}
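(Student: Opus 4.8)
The plan is to use that, for $k\ge 1$, each iterate $x^k=\proj(y^k)$ lands on exactly one of the lines $\Li_i$; write $x^k\in\Li_a$ and $x^{k+1}\in\Li_b$ and split into the cases $a=b$ and $a\ne b$. The geometric input I would isolate first is the elementary fact that for a nonzero vector $z$ on a line $\Li_a$ through the origin and a \emph{different} line $\Li_b$ through the origin one has $\dis(z,\Li_b)=\|z\|\sin\theta_{ab}\ge\|z\|\sin\theta_{\min}$, where $\theta_{ab}\in(0,\pi/2]$ is the angle between $\Li_a$ and $\Li_b$; this drops out of writing the orthogonal projection of $z$ onto $\Li_b$ explicitly, and $\theta_{\min}>0$ because there are finitely many, pairwise distinct lines.

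For the first claim I would argue by contraposition: if $x^{k+1}$ lay on a line $\Li_b\ne\Li_a$, then $\|x^{k+1}-x^k\|\ge\dis(x^k,\Li_b)\ge\|x^k\|\sin\theta_{\min}$, contradicting the hypothesis; so $a=b$ and both iterates lie on one common line $\Li$. Since $x^k=\proj(y^k)$ minimizes $\|z-y^k\|$ over all of $\Q\supseteq\Li$ while itself lying in $\Li$, it also minimizes over $\Li$, whence $y^k-x^k\perp\Li$; as $x^{k+1}-x^k\in\Li$ (a subspace) this gives $\la y^k-x^k,\,x^{k+1}-x^k\ra=0$. Expanding $\|y^k-x^{k+1}\|^2=\|(y^k-x^k)-(x^{k+1}-x^k)\|^2=\|y^k-x^k\|^2+\|x^{k+1}-x^k\|^2$ and comparing with the identity in Lemma~\ref{lem:othor} forces $\ak=1$ (when $x^{k+1}=x^k$ the identity is trivial and we simply take $\ak=1$).

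For the ``moreover'' part, observe that whenever $x^{k+1}\ne x^k$ the identity of Lemma~\ref{lem:othor} pins down $\ak=\big(\|y^k-x^{k+1}\|^2-\|y^k-x^k\|^2\big)/\|x^{k+1}-x^k\|^2$, which is $\ge 0$ precisely because $x^k=\proj(y^k)$ gives $\|y^k-x^k\|\le\|y^k-x^{k+1}\|$. Hence one is \emph{forced} to take $\ak=0$ only if $x^{k+1}\ne x^k$ and $\|y^k-x^k\|=\|y^k-x^{k+1}\|$. In that case $x^{k+1}$ also minimizes $\|z-y^k\|$ over $\Q$, so it minimizes over $\Li_b$ too and $y^k-x^{k+1}\perp\Li_b$; on the other hand $x^{k+1}=\proj(y^{k+1})\in\Li_b$ forces $y^{k+1}-x^{k+1}\perp\Li_b$ by the same reasoning. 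Subtracting the two orthogonality relations gives $y^{k+1}-y^k=-\gk\nabla f_k(x^k)\perp\Li_b$, and since $\gk>0$ this says exactly $\nabla f_k(x^k)\perp\Li_b$ with $\Li_b$ the line containing $x^{k+1}$.

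None of the algebra is heavy; the step needing the most care is the ``projection nesting'' observation that once the $\Q$-projection lands on a particular line $\Li_i$ it automatically coincides with the projection onto $\Li_i$ alone, since this is what converts a global, non-convex projection into a subspace orthogonality relation one can expand. It is also where one must separate the genuinely unavoidable $\ak=0$ case from the harmless degeneracies $x^{k+1}=x^k$ (where $\ak$ is not determined and may be set to $1$) and $x^k=\0$ (where the hypothesis of the first claim is vacuous).
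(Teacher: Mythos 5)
Your proof is correct and follows essentially the same route as the paper's: the distance bound $\dis(x^k,\Li_b)\ge\|x^k\|\sin\theta_{\min}$ forces both iterates onto one line (whence the Pythagorean identity with $\ak=1$), and the forced $\ak=0$ case makes $x^{k+1}$ a second projection of $y^k$, so that subtracting the two orthogonality relations yields $\nabla f_k(x^k)\perp\Li_i$. You merely spell out the subspace-orthogonality steps that the paper leaves implicit, and handle the degenerate cases $x^{k+1}=x^k$ and $x^k=\0$ explicitly, which is a harmless refinement.
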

The above proposition implies that $\alpha_k$ is generally positive and approaches 1 when the relative change in consecutive iterates is getting small. 

\begin{lemma}[Alternative update]\label{lem:alt_update}
Let $\{x^k\}$ be defined in (\ref{eq:bc}). Suppose $x^{k+1}\in\Li_i\subset\Q$ with $\Li_i$ being some line subspace and define $\txk:=\mathrm{proj}_{\Li_i}(y^k)$, then 
$$
x^{k+1}= \arg\min_{x\in\Li_i} \|x-(\txk-\gk \nabla f_k(x^k) )\|^2.
$$
Moreover, $x^{k+1}$ is a local minimizer of the following problem
\begin{equation}\label{eq:optim}
\min_{x\in\Q} \|x-(\txk-\gk \nabla f_k(x^k) )\|^2.
\end{equation}
\end{lemma}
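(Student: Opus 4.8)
The plan is to show that, under the hypothesis $x^{k+1}\in\Li_i$, the nonconvex projection $\proj(y^{k+1})$ collapses to an ordinary projection onto the \emph{single} line $\Li_i$, and then to exploit the linearity of that projection.

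First I would argue that $x^{k+1}=\mathrm{proj}_{\Li_i}(y^{k+1})$. Since $\mathrm{proj}_{\Li_i}(y^{k+1})\in\Li_i\subseteq\Q$, we have $\dis(y^{k+1},\Li_i)\geq\dis(y^{k+1},\Q)=\|x^{k+1}-y^{k+1}\|$, while $x^{k+1}\in\Li_i$ forces $\|x^{k+1}-y^{k+1}\|\geq\dis(y^{k+1},\Li_i)$; hence $\|x^{k+1}-y^{k+1}\|=\dis(y^{k+1},\Li_i)$, and because $\Li_i$ is a subspace the nearest point is unique, so indeed $x^{k+1}=\mathrm{proj}_{\Li_i}(y^{k+1})$. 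Writing $P_i:=\mathrm{proj}_{\Li_i}$ for the (linear) orthogonal projector onto $\Li_i$ and using $y^{k+1}=y^k-\gk\nabla f_k(x^k)$, this would give
$$
x^{k+1}=P_iy^{k+1}=P_iy^k-\gk P_i\nabla f_k(x^k)=\txk-\gk P_i\nabla f_k(x^k),
$$
since $\txk=P_iy^k$ by definition. On the other hand, $\arg\min_{x\in\Li_i}\|x-(\txk-\gk\nabla f_k(x^k))\|^2=P_i\big(\txk-\gk\nabla f_k(x^k)\big)=\txk-\gk P_i\nabla f_k(x^k)$, using $P_i\txk=\txk$ because $\txk\in\Li_i$. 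The two right-hand sides coincide, which is the first assertion.

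For the ``moreover'' part, set $w:=\txk-\gk\nabla f_k(x^k)$, so that the first assertion says $x^{k+1}$ globally minimizes $\|x-w\|^2$ over $\Li_i$. Assuming $x^{k+1}\neq\0$, the distance from $x^{k+1}$ to any other line $\Li_j$, $j\neq i$, equals $\|x^{k+1}\|\sin\theta_{ij}\geq\|x^{k+1}\|\sin\theta_{\min}>0$, where $\theta_{\min}$ is as in Proposition \ref{prop:alpha}; hence there is a radius $r>0$ with $\Q\cap B(x^{k+1},r)=\Li_i\cap B(x^{k+1},r)$. Restricting the minimization of $\|x-w\|^2$ to this ball and invoking the first assertion then shows $x^{k+1}$ is a local minimizer over $\Q$, i.e., of (\ref{eq:optim}).

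The algebra in the middle step is routine; the point that needs care is the first reduction — recognizing that membership of the iterate in $\Li_i$ turns the projection onto the union $\Q$ into a plain subspace projection — together with the treatment of the origin in the last step, where the lines all meet and the local identification $\Q=\Li_i$ breaks down. For the binary and ternary quantizers this degenerate case occurs only when $y^{k+1}=\0$, so it can be excluded, but in the general setting it must be singled out separately.
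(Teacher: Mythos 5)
Your proof is correct and takes essentially the same route as the paper's: reduce $\proj(y^{k+1})$ to the orthogonal projection onto the single line $\Li_i$ (a step you justify explicitly and the paper merely asserts), shift the base point from $y^k$ to $\txk$ using $y^k-\txk\perp\Li_i$ (which you phrase via linearity of $\mathrm{proj}_{\Li_i}$), and obtain local minimality by observing that the lines meet only at the origin, so a small ball around $x^{k+1}\neq\0$ intersects $\Q$ only in $\Li_i$. The one point of divergence is the degenerate case $x^{k+1}=\0$, which you flag but leave unresolved; the paper disposes of it by arguing that $\txk-\gk\nabla f_k(x^k)=\0$ makes $\0$ the \emph{global} minimizer of (\ref{eq:optim}) and that ``otherwise $x^{k+1}\neq\0$,'' though that last implication silently excludes the possibility that $\txk-\gk\nabla f_k(x^k)$ is nonzero yet orthogonal to $\Li_i$, so neither treatment of the origin is fully airtight.
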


\begin{lemma}\label{lem:upper}
Let $\ak$ and $\tilde{x}^k$ be defined in Lemma \ref{lem:othor} and \ref{lem:alt_update}, resp., it holds that
$$
\|x^{k+1} - \tilde{x}^k\|^2\leq \alpha_k\|x^{k+1} - x^k\|^2.
$$
\end{lemma}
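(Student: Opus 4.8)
The plan is to combine the approximate-orthogonality identity of Lemma~\ref{lem:othor} with a Pythagorean decomposition of $\|y^k-x^{k+1}\|^2$ relative to the line subspace $\Li_i$ containing $x^{k+1}$, and then to absorb the leftover term using the optimality of $x^k=\proj(y^k)$. First I would recall that in Phase~II the update gives $x^k=\proj(y^k)$, so $x^k$ is a point of $\Q$ at minimal distance to $y^k$; and that, by construction in Lemma~\ref{lem:alt_update}, $\txk=\mathrm{proj}_{\Li_i}(y^k)$ with $\Li_i$ the same line subspace that contains $x^{k+1}$.

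Next I would split $y^k-x^{k+1}=(y^k-\txk)+(\txk-x^{k+1})$. Since $\Li_i$ is a line through the origin, $\txk$ is an orthogonal projection, so $y^k-\txk\perp\Li_i$; and since both $\txk$ and $x^{k+1}$ lie on $\Li_i$, the vector $\txk-x^{k+1}$ lies in $\Li_i$. Hence the two pieces are orthogonal and
\[
\|y^k-x^{k+1}\|^2 = \|y^k-\txk\|^2 + \|\txk-x^{k+1}\|^2 .
\]
Substituting this into the identity $\ak\|x^{k+1}-x^k\|^2+\|y^k-x^k\|^2=\|y^k-x^{k+1}\|^2$ from Lemma~\ref{lem:othor} and rearranging yields
\[
\|x^{k+1}-\txk\|^2 = \ak\|x^{k+1}-x^k\|^2 + \big(\|y^k-x^k\|^2-\|y^k-\txk\|^2\big).
\]
Because $\txk\in\Li_i\subset\Q$ while $x^k$ is a nearest point of $\Q$ to $y^k$, we have $\|y^k-x^k\|\le\|y^k-\txk\|$, so the parenthesized term is $\le 0$ and the desired inequality $\|x^{k+1}-\txk\|^2\le\ak\|x^{k+1}-x^k\|^2$ follows.

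There is essentially no difficult step here: once the right decomposition is in place the argument is a two-line computation. The only points that need care are verifying that the Pythagorean split is legitimate — i.e. that $\txk$ really is the orthogonal projection onto the one-dimensional subspace $\Li_i$ containing $x^{k+1}$, so that $y^k-\txk$ is orthogonal to $\txk-x^{k+1}$ — and that $x^k=\proj(y^k)$, which is exactly what the Phase~II update provides and is also precisely the fact used to guarantee $\ak\ge 0$ in Lemma~\ref{lem:othor}. (The degenerate case $x^{k+1}=x^k$ requires no separate treatment: the displayed identity then forces $\|x^{k+1}-\txk\|^2\le 0$, hence $\txk=x^{k+1}$, consistent with the bound.)
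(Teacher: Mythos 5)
Your proof is correct and follows essentially the same route as the paper's: the orthogonal (Pythagorean) decomposition $\|y^k-x^{k+1}\|^2=\|y^k-\txk\|^2+\|\txk-x^{k+1}\|^2$ along $\Li_i$, the bound $\|y^k-x^k\|\le\|y^k-\txk\|$ from $x^k=\proj(y^k)$ and $\txk\in\Q$, and the identity of Lemma~\ref{lem:othor}. The paper merely writes these three steps as a single chain of (in)equalities, so there is nothing to add.
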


The following descent lemma will be useful.
\begin{lemma}[Descent lemma \cite{bertsekas1999nonlinear}]\label{lem:descent}
For any $x,y$, it holds that
$$
f(x)\leq f(y) + \la \nabla f(y), x-y \ra + \frac{L}{2}\|x-y\|^2.
$$
\end{lemma}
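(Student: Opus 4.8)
The plan is to reduce everything to the one-dimensional fundamental theorem of calculus along the segment joining $y$ to $x$. Since $f$ is $L$-Lipschitz differentiable (Assumption 2), $\nabla f$ is in particular continuous, so the scalar function $\phi(t):=f(y+t(x-y))$ is continuously differentiable on $[0,1]$ with $\phi'(t)=\la \nabla f(y+t(x-y)),\,x-y\ra$. Hence
$$
f(x)-f(y)=\phi(1)-\phi(0)=\int_0^1 \la \nabla f(y+t(x-y)),\,x-y\ra\,dt.
$$

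First I would subtract off the first-order term, using $\la \nabla f(y),x-y\ra=\int_0^1 \la \nabla f(y),x-y\ra\,dt$, to get
$$
f(x)-f(y)-\la \nabla f(y),x-y\ra=\int_0^1 \la \nabla f(y+t(x-y))-\nabla f(y),\,x-y\ra\,dt.
$$
Next I would bound the integrand by Cauchy--Schwarz and then invoke the Lipschitz estimate $\|\nabla f(y+t(x-y))-\nabla f(y)\|\le L\,\|t(x-y)\|=Lt\,\|x-y\|$, so the integrand is at most $Lt\,\|x-y\|^2$. Integrating, $\int_0^1 Lt\,dt=L/2$, which yields $f(x)\le f(y)+\la \nabla f(y),x-y\ra+\frac{L}{2}\|x-y\|^2$.

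There is essentially no hard step here — this is the classical ``descent lemma,'' quoted from \cite{bertsekas1999nonlinear}; the only point deserving a word of care is the legitimacy of the integral representation of $f(x)-f(y)$, which is immediate once one observes that $L$-Lipschitz differentiability forces $\nabla f$ to be continuous, so $\phi\in C^1([0,1])$. If one wished to avoid integration entirely, an alternative is to apply the mean value inequality to $g(t):=\phi(t)-t\,\phi'(0)$ on $[0,1]$, but the integral route is the cleanest and is the one I would present.
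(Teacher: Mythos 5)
Your argument is correct and is the standard proof of the descent lemma: the integral representation of $f(x)-f(y)$ via the fundamental theorem of calculus along the segment, followed by Cauchy--Schwarz and the $L$-Lipschitz bound on $\nabla f$, with $\int_0^1 Lt\,dt = L/2$. The paper itself supplies no proof for this lemma---it is quoted directly from the cited reference---so there is nothing to compare against; your write-up fills that gap with exactly the classical argument and is complete.
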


We recall the definition of subdifferential for proper and lower semicontinuous functions.
\begin{definition}[Subdifferential \cite{mordukhovich_06, Rockafellar_09}]
Let  $h: \mathbb{R}^n \rightarrow (-\infty, +\infty]$ be a proper and lower semicontinuous function. We define $\mathrm{dom} (h):=\{x\in \mathbb{R}^n: h(x)<+\infty\}.$ For a given $x\in \mathrm{dom}(h)$, the Fr$\acute{e}$chet subdifferential of $h$ at $x$, written as $\hat{\partial}h (x)$, is the set of all vectors $u\in \mathbb{R}^n$ which satisfy
  $$\lim_{y\neq x}\inf_{y\rightarrow x}\frac{h(y)-h(x)-\langle u, y-x\rangle}{\|y-x\|}\geq 0.$$
When $x\notin \mathrm{dom} (h)$, we set $\hat{\partial}h(x)=\emptyset$.
The (limiting) subdifferential, or simply the subdifferential, of $h$ at $x\in \mathbb{R}^n$, written as $\partial h(x)$, is defined through the following closure process
$$\partial h(x):=\{u\in\mathbb{R}^n: \exists \, x^k\rightarrow x, \; h(x^k)\rightarrow h(x)~\textrm{and}~ u^k\in \hat{\partial}h(x^k)\rightarrow u~\textrm{as}~k\rightarrow \infty\}.$$

\end{definition}

\subsection{Main results} We are in the position to present the convergence results, which are established under an approximate orthogonality condition on $\alpha_k$ in Lemma \ref{lem:othor}.
\begin{theorem}\label{thm}
Let $\{x^k\}$ be the sequence generated by (\ref{eq:bc}). Suppose there exist $ \ubar{\alpha}, \bar{\alpha}, \gamma>0$ such that $\uba\leq\alpha_k \leq \ba$ and $\gamma_{k+1}\leq\gk\leq \gamma<\frac{\uba}{2L}$ for all $k\in\mathbb{N}$. Then
$$
\lim_{k\to\infty}\E[\|x^{k+1}-x^k\|^2] = 0,
$$ 
if $\sum_{k=0}^{\infty} \gk^2<\infty$. If further $\sum_{k=0}^\infty\gk = \infty$, we have

$$\liminf_{k\to\infty} \; \E[\dis(\0,\partial h(x^k))^2] \leq 3\sigma^2\left(\frac{4\ba}{\uba^2} + 1\right),$$ 
where $h = f + \chi_\Q$ is the overall objective function.
\end{theorem}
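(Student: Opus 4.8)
The plan is to establish a one-step descent inequality for $f$ along the sequence $\{x^k\}$ and then sum it, exploiting that $\{\gk\}$ is nonincreasing. For the descent inequality, apply Lemma \ref{lem:descent} to $x^k,x^{k+1}$ and split $\nabla f(x^k)=\nabla f_k(x^k)+e^k$ with $e^k:=\nabla f(x^k)-\nabla f_k(x^k)$. For the deterministic part use $\gk\nabla f_k(x^k)=y^k-y^{k+1}$ together with the identity behind Lemma \ref{lem:othor}, which rearranges to $\la y^k-x^k,\,x^{k+1}-x^k\ra=\tfrac{1-\ak}{2}\|x^{k+1}-x^k\|^2$; this produces $\gk\la\nabla f_k(x^k),x^{k+1}-x^k\ra=-\tfrac{1+\ak}{2}\|x^{k+1}-x^k\|^2+\la y^{k+1}-x^{k+1},\,x^k-x^{k+1}\ra$, and the last inner product is $\le\tfrac12\|x^{k+1}-x^k\|^2$ because $x^{k+1}=\proj(y^{k+1})$ is no farther from $y^{k+1}$ than the feasible point $x^k$. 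Hence $\la\nabla f_k(x^k),x^{k+1}-x^k\ra\le-\tfrac{\ak}{2\gk}\|x^{k+1}-x^k\|^2$. Absorb $\la e^k,x^{k+1}-x^k\ra$ with Young's inequality; the leftover positive multiple of $\|x^{k+1}-x^k\|^2$ and the $\tfrac L2$ Lipschitz term are dominated by the negative term since $\ak\ge\uba$ and $\gk\le\gamma<\tfrac{\uba}{2L}$. Taking expectations and using $\E[\|e^k\|^2]\le\sigma^2$ gives
\[
\E[f(x^{k+1})]\le\E[f(x^k)]-\tfrac{c_1}{\gk}\,\E[\|x^{k+1}-x^k\|^2]+c_2\,\gk\,\sigma^2
\]
for positive constants $c_1,c_2$ depending only on $\uba,L,\gamma$, arranged so that $c_2/c_1=4/\uba^2$.

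For the first claim, rewrite this as $\E[\|x^{k+1}-x^k\|^2]\le\tfrac{\gk}{c_1}\big(\E[f(x^k)]-\E[f(x^{k+1})]\big)+\tfrac{c_2\sigma^2}{c_1}\gk^2$ and sum over $k=0,\dots,K$. The $\gk^2$ terms sum to something finite by hypothesis, and summation by parts gives $\sum_{k=0}^{K}\gk\big(\E[f(x^k)]-\E[f(x^{k+1})]\big)=\gamma_0 f(x^0)+\sum_{k=1}^{K}(\gk-\gamma_{k-1})\E[f(x^k)]-\gamma_K\E[f(x^{K+1})]\le\gamma_0 f(x^0)$, using $\gk\le\gamma_{k-1}$ and $f\ge0$. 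Hence $\sum_{k=0}^{\infty}\E[\|x^{k+1}-x^k\|^2]<\infty$, so $\E[\|x^{k+1}-x^k\|^2]\to0$.

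For the second claim, first bound $\dis(\0,\partial h(x^{k+1}))$. Since $f\in C^1$ and $x^{k+1}$ lies in a line subspace $\Li_i\subset\Q$, the subdifferential sum rule gives $\nabla f(x^{k+1})+\Li_i^{\perp}\subseteq\partial h(x^{k+1})$, so $\dis(\0,\partial h(x^{k+1}))\le\|\mathrm{proj}_{\Li_i}\nabla f(x^{k+1})\|$. Decompose $\nabla f(x^{k+1})=(\nabla f(x^{k+1})-\nabla f(x^k))+e^k+\nabla f_k(x^k)$; by Lemma \ref{lem:alt_update} the update on $\Li_i$ reads $\mathrm{proj}_{\Li_i}\nabla f_k(x^k)=(\txk-x^{k+1})/\gk$, and with Lemma \ref{lem:upper} and $\ak\le\ba$ this yields $\dis(\0,\partial h(x^{k+1}))\le\big(L+\tfrac{\sqrt{\ba}}{\gk}\big)\|x^{k+1}-x^k\|+\|e^k\|$. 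Squaring via $(a+b+c)^2\le3(a^2+b^2+c^2)$ and taking expectations,
\[
\E[\dis(\0,\partial h(x^{k+1}))^2]\le 3L^2\E[\|x^{k+1}-x^k\|^2]+\tfrac{3\ba}{\gk^2}\E[\|x^{k+1}-x^k\|^2]+3\sigma^2 .
\]
Now sum the descent inequality to $K$ to get $\sum_{k=0}^{K}\gk\big(\tfrac{c_1}{\gk^2}\E[\|x^{k+1}-x^k\|^2]-c_2\sigma^2\big)\le f(x^0)$ for all $K$; since $\sum_k\gk=\infty$, the bracket cannot stay bounded below by a positive constant along all large $k$, whence $\liminf_k\tfrac{\E[\|x^{k+1}-x^k\|^2]}{\gk^2}\le\tfrac{c_2}{c_1}\sigma^2=\tfrac{4\sigma^2}{\uba^2}$. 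Substituting into the displayed bound, and using that $3L^2\E[\|x^{k+1}-x^k\|^2]\to0$ by the first claim, gives $\liminf_k\E[\dis(\0,\partial h(x^k))^2]\le3\sigma^2+3\ba\cdot\tfrac{4\sigma^2}{\uba^2}=3\sigma^2\big(\tfrac{4\ba}{\uba^2}+1\big)$.

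The main obstacle is the descent inequality, and within it the control of the terms that appear precisely because $x^k$ and $x^{k+1}$ can lie in \emph{different} line subspaces of $\Q$: the projection onto $\Q$ is discontinuous, so $x^{k+1}-x^k$ is not $O(\gk)$ and no nonexpansiveness is available. This is exactly where the approximate-orthogonality coefficient $\ak$ of Lemma \ref{lem:othor}, the displacement estimate of Lemma \ref{lem:upper}, and the subspace-angle geometry behind Proposition \ref{prop:alpha} are needed, and it is also where the precise constants $c_1,c_2$—hence the final constant $\tfrac{4\ba}{\uba^2}+1$—get pinned down; the remaining steps (summation by parts, the $\liminf$ argument, and the subgradient estimate) are then routine.
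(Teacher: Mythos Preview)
Your proposal is correct and follows essentially the same route as the paper: the descent inequality via Lemma~\ref{lem:descent}, the bound $\la\nabla f_k(x^k),x^{k+1}-x^k\ra\le-\tfrac{\ak}{2\gk}\|x^{k+1}-x^k\|^2$ (the paper gets this from the variational form $x^{k+1}=\arg\min_{x\in\Q}\la\nabla f_k(x^k),x\ra+\tfrac{1}{2\gk}\|x-y^k\|^2$, while you expand inner products using Lemma~\ref{lem:othor} and $\|y^{k+1}-x^{k+1}\|\le\|y^{k+1}-x^k\|$; these are equivalent), Young's inequality on the noise term, then telescoping/Abel summation for the first claim, and for the second claim the same subgradient estimate via Lemmas~\ref{lem:alt_update}--\ref{lem:upper} followed by the $\liminf$ argument.

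One small imprecision: you cannot take $c_1,c_2$ \emph{fixed} with $c_2/c_1=4/\uba^2$. With the Young parameter you describe, the descent inequality reads
\[
\E[f(x^{k+1})]\le\E[f(x^k)]-\frac{\uba-2\gk L}{4\gk}\E[\|x^{k+1}-x^k\|^2]+\frac{\gk\sigma^2}{\uba},
\]
so the coefficient in front of $\tfrac{1}{\gk}\E[\|x^{k+1}-x^k\|^2]$ is $k$-dependent. Replacing it by the fixed lower bound $(\uba-2\gamma L)/4$ would only yield $\liminf_k\E[\|x^{k+1}-x^k\|^2]/\gk^2\le\tfrac{4\sigma^2}{\uba(\uba-2\gamma L)}$, hence a strictly weaker final constant. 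The paper keeps the $k$-dependent factor, sums to get $\liminf_k(\uba-2\gk L)\E[\|x^{k+1}-x^k\|^2]/(4\gk^2)\le\sigma^2/\uba$, and then uses $\gk\to0$ (a consequence of $\sum\gk^2<\infty$) to pass to $\liminf_k\E[\|x^{k+1}-x^k\|^2]/\gk^2\le4\sigma^2/\uba^2$. With that adjustment your argument matches the paper's and yields the stated constant.
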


\bigskip

\section{Concluding Remarks}\label{sec:rem}
From optimization point of view, we proposed BinaryRelax, a novel relaxation approach for training quantized neural networks. Our algorithm iterates between a hybrid gradient step for updating the float weights and a weighted average of the computed float weights and their quantizations. We increase slowly the parameter that controls the average to drive the weights to the quantized state. In order to get the purely quantized weights, exact quantization replaces the weighted average in the second phase of training. Extensive experiments shows that with about the same training cost, BinaryRelax is consistently better than its BinaryConnect counterpart in terms of validation accuracy. It has clearer advantage on larger networks, which yield more complex landscape of the training loss with spurious local minima. In addition, BinaryRelax is provably convergent in expectation under an approximate orthogonality condition, which is another contribution of this paper.

\bibliographystyle{siamplain}
\bibliography{references}

\bigskip

\section*{Appendix: Technical proofs}

\begin{proof}[Proof of Proposition \ref{prop:global}]
Since $x^*_\lambda$ is the global minimizer of (\ref{eq:binaryrelax}),
$$
f_\Q^*\geq f(x^*_\lambda) + \frac{\lambda}{2}\dis(x^*_\lambda,\Q)^2\geq f^* + \frac{\lambda}{2}\dis(x^*_\lambda,\Q)^2,
$$
where $f^* = \min_{x\in\R^n} f(x)>-\infty$. So
$$
\dis(x^*_\lambda, \Q) \leq \sqrt{\frac{2(f^*_\Q-f^*)}{\lambda}}\to 0, \mbox{ as } \lambda\to\infty.
$$
Denote $x^*_{\lambda,\Q}= \proj(x^*_\lambda)$, then $\|x^*_{\lambda,\Q}-x^*_\lambda\|\to0$ as $\lambda\to\infty$.
Since $f^*_\Q$ is the minimum in $\Q$, further we have 
$$
f(x^*_\lambda) + \frac{\lambda}{2}\dis(x^*_\lambda,\Q)^2 \leq f^*_\Q\leq f(x^*_{\lambda,\Q})\to f(x^*_\lambda), \mbox{ as } \lambda\to\infty.
$$
Therefore, $\lim_{\lambda\to\infty} f(x^*_\lambda)=f^*_\Q$.
\end{proof}

\bigskip

\begin{proof}[Proof of Proposition \ref{prop:approx_quant}]
Problem (\ref{eq:relax_proj}) amounts to
$$
\min_{x}\min_{z\in\Q} \;  \frac{1}{2}\|x-y^{k}\|^2 + \frac{\lambda}{2}\|z-x\|^2 = 
\min_{z\in\Q}\min_{x} \;  \frac{1}{2}\|x-y^{k}\|^2 + \frac{\lambda}{2}\|z-x\|^2. 
$$
With fixed $z\in\Q$, the inner problem is minimized at 
$x = \frac{\lambda \, z + y^{k}}{\lambda +1}$. Then it reduces to
\begin{align*}
 z^* = & \arg\min_{z\in\Q} \frac{1}{2} \left\|\frac{\lambda z + y^{k}}{\lambda +1}-y^{k}\right\|^2
+ \frac{\lambda}{2}\left\|z-\frac{\lambda z + y^{k}}{\lambda +1}\right\|^2 \\
= & \arg\min_{z\in\Q}\|z - y^{k}\|^2 = \proj(y^{k}).
\end{align*}
Therefore, $x^{k} = \frac{\lambda \, \proj(y^{k}) + y^{k}}{\lambda+1}$ is the optimal solution.
\end{proof}

\bigskip

\begin{proof}[Proof of Proposition \ref{thm:local}]
Proof by contradiction. Let us assume $x^*\in\Q$ is a local minimizer of problem (\ref{eq:binaryrelax}) and $\nabla f(x^*)\neq \0$. Then for any point $x$ in the neighborhood of $x^*$, we have
$$
f(x^*)\leq f(x) + \frac{\lambda}{2} \dis(x,\Q)^2\leq f(x) + \frac{\lambda}{2}\|x-x^*\|^2.
$$
Set $x = x^* - \beta \nabla f(x^*)$ with a small $\beta>0$. The above inequality reduces to
\begin{equation}\label{eq:local}
f(x^*)\leq f(x^* - \beta \nabla f(x^*)) + \frac{\lambda\beta^2}{2}\|\nabla f(x^*)\|^2.
\end{equation}
On the other hand, by Taylor's expansion,
\begin{equation}\label{eq:taylor}
f(x^* - \beta \nabla f(x^*)) = f(x^*) -\beta\|\nabla f(x^*)\|^2 + o(\beta).
\end{equation}
Combining (\ref{eq:local}) and (\ref{eq:taylor}), we have
$$
\|\nabla f(x^*)\|^2 \leq \frac{\lambda\beta}{2}\|\nabla f(x^*)\|^2 + o(1),
$$
which leads to a contradiction as we let $\beta\to0$.
\end{proof}

\bigskip

\begin{proof}[Proof of Lemma \ref{lem:othor}]
Since $x^k, x^{k+1}\in\Q$ and $x^k = \proj(y^k)$, it holds that $\|y^k-x^k\|^2 \leq \|y^k-x^{k+1}\|^2$, i.e., $\alpha_k\geq 0$. 
\end{proof}

\bigskip

\begin{proof}[Proof of Proposition \ref{prop:alpha}]
Since the only intersection of the line subspaces is the origin, the distance between $x^k$ and any other line is at least $\|x^k\|\sin\theta_{\min}$. If $\|x^{k+1}-x^k\|< \|x^k\| \sin\theta_{\min}$, then $x^k$ and $x^{k+1}$ must lie in the same line, and therefore $\alpha_k = 1$. On the other hand, if $\alpha_k$ can only be $0$, then it must hold that $\|y^k - x^k\| = \|y^k - x^{k+1}\|$ and $x^k \neq x^{k+1}$, meaning that $x^{k+1}$ is a different projection of $y^k$ onto $\Q$. Moreover, since the projection of $y^{k+1} = y^k -\gamma_k \nabla f_k(x^k)$ onto $\Q$ is also $x^{k+1}$. Suppose $x^{k+1}\in\Li_i\subset\Q$, then $\nabla f_k(x^k)\perp \Li_i$.
\end{proof}

\bigskip

\begin{proof}[Proof of Lemma \ref{lem:alt_update}] By the assumption, we have
$$
x^{k+1} = \mathrm{proj}_{\Li_i}(y^{k}-\gk \nabla f_k(x^k)) = \mathrm{proj}_{\Li_i}(\txk-\gk \nabla f_k(x^k) + y^k- \txk).
$$
Note that $y^k - \txk \perp \Li_i$ (see Fig.~\ref{fig:proj}), then 
$$
x^{k+1} = \mathrm{proj}_{\Li_i}(\txk -\gk \nabla f_k(x^k)).
$$
So $x^{k+1}$ is the closest point to $\txk -\gk \nabla f_k(x^k) $ on $\Li_i$. If $\txk -\gk \nabla f_k(x^k) =\0$, then $x^{k+1} = \0$ is the global minimizer of (\ref{eq:optim}). Otherwise, $x^{k+1}\neq\0$. Since the line subspaces that constitute $\Q$ only intersect at the origin, there exists a neighborhood $\mathcal{N}$ of $x^{k+1}$ such that $\mathcal{N}\cap\Q\subset \Li_i$. Therefore, $x^{k+1}$ is a local minimizer of problem (\ref{eq:optim}).
\end{proof}

\begin{figure}
\begin{center}
\includegraphics[width=.53\textwidth,height = 0.4\textwidth]{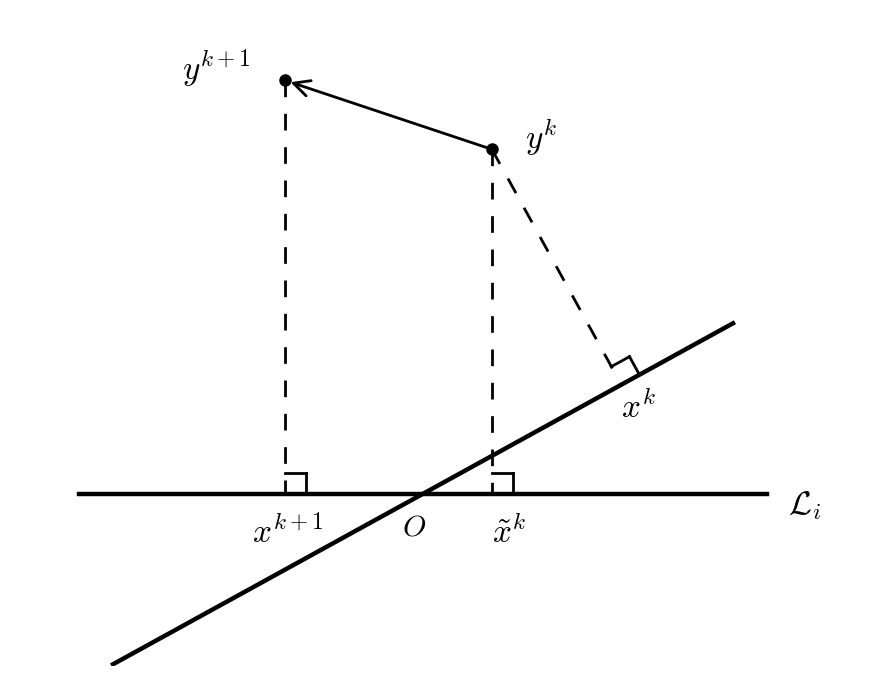}
\end{center}
\caption{Illustration of Lemma \ref{lem:alt_update}. $y^{k+1} = y^{k} -\gamma_k \nabla f_k(x^{k})$, so $x^{k+1}$ is also the projection of $\txk - \gamma_k \nabla f_k(x^{k})$ onto $\Li_i$. }\label{fig:proj}
\end{figure}

\bigskip

\begin{proof}[Proof of Lemma \ref{lem:upper}]
 Using the facts $x^k = \proj(y^k)$, $\txk = \mathrm{proj}_{\Li_i}(y^k)\in\Q$ and $x^{k+1}\in\Li_i$ and invoking Lemma \ref{lem:othor}, we have
\begin{align*}
& \|x^{k+1}-\txk\|^2 = \|y^k-x^{k+1}\|^2 - \|y^k-\txk\|^2 \\
\leq & \|y^k-x^{k+1}\|^2 - \|y^k-x^k\|^2 = \ak \|x^{k+1}-x^k\|^2. 
\end{align*}
\end{proof}

\bigskip

\begin{proof}[Proof of Theorem \ref{thm}]
By Lemma \ref{lem:descent},
\begin{align}\label{eq:descent}
& f(x^{k+1})\leq f(x^k) + \la \nabla f(x^k), x^{k+1}-x^k \ra + \frac{L}{2}\|x^{k+1}-x^k\|^2 \notag \\
= & f(x^k) + \la \nabla f_k(x^k),x^{k+1}-x^{k}\ra + \la \nabla f(x^k) -\nabla f_k(x^k),x^{k+1}-x^{k}\ra + \frac{L}{2}\|x^{k+1} - x^k\|^2. 
\end{align}

The cross terms need care. We rewrite the update 
$x^{k+1} = \proj(y^k - \gamma_k\nabla f_k(x^k))$
as
\begin{equation*}
x^{k+1} = \arg\min_{x\in\Q} \; \la \nabla f_k(x^k), x\ra + \frac{1}{2\gamma_k}\|x- y^k\|^2.
\end{equation*}
Since $x^k\in \Q$, we have
$$\la \nabla f_k(x^k), x^{k+1}\ra + \frac{1}{2\gamma_k}\|x^{k+1} - y^k \|^2 
\leq \la \nabla f_k(x^k), x^k\ra + \frac{1}{2\gamma_k}\|x^k - y^k \|^2.
$$
Then by Lemma \ref{lem:othor},
\begin{equation}\label{eq:cross1}
 \la \nabla f_k(x^k), x^{k+1} - x^k \ra \leq  \frac{1}{2\gamma_k}(\|x^k - y^k \|^2 - \|x^{k+1} - y^k \|^2 ) \leq -\frac{\uba}{2\gamma_k}\|x^{k+1}-x^k\|^2.
\end{equation}
By Young's inequality,
\begin{equation}\label{eq:cross2}
\la \nabla f(x^k) -\nabla f_k(x^k),x^{k+1}-x^{k}\ra  \leq \frac{\gamma_k}{\uba}\|\nabla f(x^k) -\nabla f_k(x^k)\|^2 +  \frac{\uba}{4\gamma_k}\|x^{k+1}-x^k\|^2.
\end{equation}
Combining (\ref{eq:descent}), (\ref{eq:cross1}) and (\ref{eq:cross2}) and taking the expectation gives
\begin{equation}\label{eq:Lyap}
\E[f(x^{k+1})]\leq \E[f(x^k)]-\frac{\uba -2\gk L}{4\gk}\E[\|x^{k+1} -x^k\|^2] + \frac{\gk\sigma^2}{\uba}.
\end{equation}
Multiplying (\ref{eq:Lyap}) by $\gk$ and using $\alpha_k\geq \uba>0$, $\gamma_{k+1}\leq \gk\leq \gamma<\frac{\uba}{2L}$ and $f\geq0$, we obtain
$$
\gamma_{k+1}E[f(x^{k+1})]\leq \gamma_{k}\E[f(x^{k+1})]\leq \gamma_{k}\E[f(x^{k})] - (\uba - 2\gamma L)\E[\|x^{k+1}-x^k\|^2] + \frac{\gk^2\sigma^2}{\uba}
$$
Rearranging terms in the above inequality and taking the sum over $k$, we have
$$
(\uba - 2\gamma L)\sum_{k=0}^\infty \E[\|x^{k+1}-x^k\|^2]\leq \gamma f(x^0)-\lim_{k\to\infty}\gamma_{k}\E[f(x^{k})]+\frac{\sigma^2}{\uba}\sum_{k=0}^\infty\gk^2<\infty.
$$
Therefore, $\lim_{k\to\infty}\E[\|x^{k+1}-x^k\|^2]= 0$.

\medskip

Next we prove the second claim. By Lemma \ref{lem:alt_update}, the first-order optimality condition of (\ref{eq:optim}) holds at $x^{k+1}$. So
$$
\0\in \nabla f_k(x^k) + \frac{x^{k+1}-\txk}{\gk} + \partial \chi_\Q(x^{k+1}),
$$
which implies 
$$-\frac{x^{k+1}-\txk}{\gk} - \nabla f_k(x^k) + \nabla f(x^{k+1})\in \nabla f(x^{k+1}) + \partial \chi_\Q(x^{k+1}) = \partial h(x^{k+1}).
$$
Therefore, 
\begin{align}\label{eq:est}
& \E[\dis(\0, \partial h(x^{k+1}))^2] \notag \\
\leq & \E\left[\left\|-\frac{x^{k+1}-\txk}{\gk} - \nabla f_k(x^k) + \nabla f(x^{k+1})\right\|^2\right]  \notag \\
\leq & 3\left(  \E \left[\frac{\|x^{k+1}-\txk\|^2}{\gk^2} \right] + \E[\|\nabla f_k(x^k)-\nabla f(x^k) \|^2] + \E[\|\nabla f(x^k) - \nabla f(x^{k+1}) \|^2] \right) \notag \\
\leq & 3\left( \ba\E\left[\frac{\|x^{k+1}-x^k\|^2}{\gk^2} \right] + \sigma^2 + L^2 \E[\|x^{k+1} - x^{k} \|^2]  \right).
\end{align}
The second inequality above holds because of Cauchy-Schwarz inequality. In the last inequality, we used Lemma \ref{lem:upper} and the assumption that $f$ is $L$-Lipschitz differentiable. We want to bound $\liminf_{k\to\infty} \E \left[ \frac{\|x^{k+1}-x^k\|^2}{\gk^2} \right]$. From (\ref{eq:Lyap}) it follows that 
$$
\gk\left( (\uba-2\gk L) \E\left[ \frac{\|x^{k+1}-x^k\|^2 }{4\gk^2}\right]- \frac{\sigma^2}{\uba}  \right) \leq \E[f(x^k) - f(x^{k+1})].
$$
Summing the above inequality over $k$ yields
\begin{align*}
\sum_{k=0}^{\infty} \gk\left( (\uba-2\gk L) \E\left[ \frac{\|x^{k+1}-x^k\|^2}{4\gk^2}\right]- \frac{\sigma^2}{\uba} \right)
\leq f(x^0) < \infty.
\end{align*}
Since $\gk>0$ and $\sum_{k=1}^\infty \gk=\infty$, we must have 
$$
\liminf_{k\to\infty} \; (\uba-2\gk L)\E\left[ \frac{\|x^{k+1}-x^k\|^2 }{4\gk^2}\right]- \frac{\sigma^2}{\uba}   \leq 0,
$$
and thus
$$
\liminf_{k\to\infty} \; \E\left[\frac{\|x^{k+1}-x^k\|^2}{\gk^2}\right]
\leq \lim_{k\to\infty}\frac{4\sigma^2}{\uba(\uba -2\gamma_k L)} = \frac{4\sigma^2}{\uba^2}.
$$
Finally, from (\ref{eq:est}) it follows that
\begin{equation*}
\liminf_{k\to\infty} \; \E[\dis(\0, \partial h(x^{k}))^2] \leq 3\sigma^2\left(\frac{4\ba}{\uba^2} + 1\right),
\end{equation*}
which completes the proof.
\end{proof}

\end{document}